\newcommand{\R}{\mathbb{R}}
\newcommand{\Rbar}{\overline{\mathbb{R}}}
\newcommand{\pmat}[1]{\begin{pmatrix}#1\end{pmatrix}}
\newcommand{\lambdaSum}{\lambda_\text{sum}}
\newcommand{\lambdaMax}{\lambda_\text{max}}
\newcommand{\lambdaL}{\lambda_\text{L}}
\newcommand{\lambdaS}{\lambda_\text{S}}
\newcommand{\tauSum}{\tau_\text{sum}}
\newcommand{\tauMax}{\tau_\text{max}}
\DeclareMathOperator{\proj}{Proj}
\newcommand{\Proj}{\proj}
\newcommand{\dom}{\mathrm{dom}}
\newcommand{\Linear}{\mathcal{L}}
\newcommand{\AAA}{\Linear}
\newcommand{\data}{A} 
\newcommand{\obs}{\data} 
\newcommand{\phiSum}{\phi_{\text{sum}}}
\newcommand{\phiMax}{\phi_{\text{max}}}
\newcommand{\gauge}[2]{\gamma\left(#1 \mid #2 \right)}
\newcommand{\smoothFcn}{\omega}
\newcommand{\proxFcn}{\psi}
\newcommand{\linear}{\Linear}
\newcommand{\offset}{b}
\newcommand{\bigProxFcn}{\Phi}
\newcommand{\bigDualFcn}{\Psi}
\newcommand{\bigX}{\mathbf{z}} 
\newcommand{\bigY}{\mathbf{w}}
\newcommand{\dual}{\bigX}
\newcommand{\smalldual}{z}
\newcommand{\dualFcn}{q}
\newcommand{\bigLinear}{\mathbf{L}}
\newcommand{\Lipschitz}{\ell}
\newcommand{\iprod}[2]{\left\langle #1,\, #2\right\rangle}
\DeclareMathOperator*{\argmin}{arg\,min}
\DeclareMathOperator{\prox}{Prox} 
\newcommand{\order}{\mathcal{O}}
\DeclareMathOperator{\diag}{diag}
\newtheorem{assumption}{Assumption}
\newcommand{\defeq}{\stackrel{\text{\tiny def}}{=}}  
\newcommand{\vertiii}[1]{{\left\vert\kern-0.25ex\left\vert\kern-0.25ex\left\vert #1
    \right\vert\kern-0.25ex\right\vert\kern-0.25ex\right\vert}}
\newcommand{\matrixNorm}[1]{\vertiii{#1}} 
\newcommand{\Loracle}{L_\text{oracle}}
\newcommand{\Soracle}{S_\text{oracle}}
\newcommand{\norm}[1]{\| #1 \|}
\newcommand{\<}{\langle}
\renewcommand{\>}{\rangle}
\newcommand{\eps}{\varepsilon}
\begin{document}

\title{Dual Smoothing and Level Set Techniques for Variational Matrix Decomposition}

\author{\name Aleksandr Aravkin \email saravkin@uw.edu \\
       \addr Department of Applied Mathematics\\
       University of Washington\\
       Seattle, WA 
       \AND
       \name Stephen Becker \email Stephen.Becker@colorado.edu \\
       \addr Department of Applied Mathematics\\
       University of Colorado, Boulder\\
       Boulder, CO 
       }

\editor{chapter in ``Robust Low-Rank and Sparse Matrix Decomposition: Applications in Image and Video Processing'', T. Bouwmans, N. Aybat, E. Zahzah, eds. CRC Press, 2016.}
    
\maketitle

\begin{abstract}
 We focus on the robust principal component analysis (RPCA) problem, and review 
 a range of old and new convex formulations for the problem and its variants.
We then review dual smoothing and level set techniques in convex optimization, 
present several novel theoretical results, and apply the techniques 
on the RPCA problem. In the final sections, we show a range of numerical  experiments 
for simulated and real-world problems.   \end{abstract}

\section{Introduction}
Linear superposition is  a useful model for many applications, including nonlinear mixing problems. Surprisingly, we can perfectly distinguish multiple elements in a given signal using convex optimization as long as they are concise and look sufficiently different from one another. 
{Robust principal component analysis (RPCA)} is a 
key example, where we decompose a signal into the sum of low rank and sparse components.
RPCA is a special case of {\it stable principal component pursuit (SPCP)}\index{stable principal component pursuit}, 
where we also allow an explicit noise component within the RPCA decomposition. 
Applications include alignment of occluded images~\citep{PenGanWri:12}, 
scene triangulation~\citep{ZhaLiaGan:11},  model selection~\citep{CPW12}, face recognition, and document indexing~\citep{CanLiMa:11}. 

For SPCP, our model is
\begin{equation} \label{eqModel}
\obs = L + S + E
\end{equation}
where $A$ is the observed matrix, $L$ is a low-rank matrix, $S$ is a sparse matrix, and $E$ is an unstructured nuisance matrix (e.g., a stochastic error term).
The classic RPCA formulation \citep{CanLiMa:11} assumed $E=0$,  
but in general we do not distinguish between RPCA and SPCP. 

The RPCA problem uses regularization on the summands $L$ and $S$
in order to improve the recovery of the solution. In~\citep{candes2011robust},
the 1-norm regularizer is applied to $S$ to promote sparsity, and the nuclear norm
is applied to $L$ to penalize rank: 
\begin{equation}
\label{eq:classicRPCA}
\min \matrixNorm{L}_* + \lambda \|S\|_1 \quad \mbox{s.t.} \quad \obs = L + S.
\end{equation}
The 1-norm $\|\cdot\|_1$ and nuclear norm $\matrixNorm{\cdot}_*$ are given by
$
\|S\|_1 = \sum_{i,j} |s_{i,j}|$ and $\matrixNorm{L}_* = \sum_{i} \sigma_i(L),$ 
where $\sigma(L)$ is the vector of singular values of $L$.
The parameter $\lambda > 0$ controls the
relative importance of the low-rank term $L$ vs.\ the sparse term $S$.
This problem has been analyzed by~\citep{ChaSanPar:09,CanLiMa:11}, and it has perfect recovery guarantees under stylized incoherence assumptions. There is even theoretical guidance for selecting a minimax optimal regularization parameter $\lambda$ \citep{CanLiMa:11}. 

There are several modeling choices underlying formulation~\eqref{eq:classicRPCA}.
First is the choice of the $\ell_1$-norm to promote sparsity and the trace norm (aka 
nuclear norm) 
to promote low-rank solutions. 
We will keep with these choices throughout the entire chapter, noting where it is possible
to use more general penalties.
Second,~\eqref{eq:classicRPCA} assumes the data are fit exactly. 
Unfortunately, many practical problems only approximately satisfy the idealized assumptions.
This motivates the SPCP variant:
\begin{equation} \label{eq:sum-SPCP} \tag{$\text{SPCP}_\text{sum}$}
\begin{aligned}
& \min_{L,S}\; \matrixNorm{L}_* + \lambdaSum \|S\|_1 \\
& \text{subject to}\; \|L+S-A\|_F \le \eps, 
\end{aligned}
\end{equation}
where the  $\eps$ parameter accounts for the unknown perturbations
$A-(L+S)$ in the data not explained by $L$ and $S$. 
It is useful to define $\phi(L,S) = \matrixNorm{L}_* + \lambda \|S\|_1$ as 
a regularizer on the decision variable $(L,S)$. The formulation~\eqref{eq:classicRPCA} then tries to find 
the tuple $(\overline L, \overline S)$ that fits the data perfectly, and is minimal with respect 
to $\phi$. 

Third, the functional form of $\phi$ is important; in~\eqref{eq:classicRPCA} as well as~\eqref{eq:sum-SPCP}
the component penalties are added with a tradeoff parameter $\lambda$,
but other choices can be made as well. 
In particular,~\citet{aravkin2014variational} propose a new variant called ``max-SPCP'':
\begin{equation} \label{eq:max-SPCP} \tag{$\text{SPCP}_\text{max}$}
\begin{aligned}
& \min_{L,S}\; \max\left( \matrixNorm{L}_* , \lambdaMax \|S\|_1 \right) \\
& \text{subject to}\; \|L+S-A\|_F \le \eps,
\end{aligned}
\end{equation}
where $\lambdaMax>0$ acts similar to $\lambdaSum$, and 
this new formulation  offers both modeling and computational advantages over \eqref{eq:sum-SPCP}
(see Section~\ref{sec:numerics}). We show that cross-validation with \eqref{eq:max-SPCP} to estimate $(\lambdaMax,\eps)$ is significantly easier than estimating $(\lambdaSum,\eps)$ in \eqref{eq:sum-SPCP}. 
Given an \emph{oracle} that provides an ideal separation $A \simeq \Loracle + \Soracle$, 
we can use $\eps = \|\Loracle  + \Soracle - A \|_F$ in both cases. 
However, while we can estimate $\lambdaMax = \matrixNorm{\Loracle}_*/\|\Soracle\|_1$, 
it is not clear how to choose $\lambdaSum$ from data, without 
using cross-validation or assuming a probabilistic model.  

Finally, both~\eqref{eq:sum-SPCP} and~\eqref{eq:max-SPCP} assume
a least-squares penalty functional to measure the error up to level $\epsilon$. 
We can consider a more general choice of penalty $\rho$:
\begin{equation}
\label{eq:generalRPCA}
\min \phi(L,S) \quad \mbox{s.t.} \quad \rho(A - L - S) \leq \eps.
\end{equation}
Robust losses as well as $\rho$ arising from probabilistic models have been explored 
in~\citep{aravkin2014fast,aravkin2016level}.

Once $\rho$ and $\phi$ have been selected, we can choose the type of regularization
formulation one wants to solve. Formulation~\eqref{eq:generalRPCA} minimizes the regularizer
subject to a constraint on the misfit error. Two other common formulations are
\begin{equation}
\label{eq:conRPCA}
\min \rho(A - L - S) \quad \mbox{s.t.} \quad  \phi(L,S) \leq \tau,
\end{equation}
which minimizes the error subject to a constraint on the regularizer, and 
\begin{equation}
\label{eq:lagRPCA}
\min \rho(A - L - S) +\lambda  \phi(L,S),
\end{equation}
which minimizes the sum of error and regularizer with another tradeoff parameter 
to balance these goals. 

All three formulations can be effectively used, and are equivalent in the sense that solutions 
match for certain values of parameters $\lambda$, $\tau$, and $\eps$.
Formulation~\eqref{eq:generalRPCA} is preferable from a modeling perspective when 
the misfit level $\eps$ is known ahead of time, or can be estimated. 
However, formulations~\eqref{eq:conRPCA} and~\eqref{eq:lagRPCA} often have 
fast first-order algorithms available for their solution. 

It turns out that we can exploit algorithms for~\eqref{eq:conRPCA} to solve~\eqref{eq:generalRPCA}
using the graph of the value function for problem~\eqref{eq:conRPCA};
this relationship can be used to show that the problems have the same complexity~\citep{aravkin2016level}. 
Level set optimization was first applied for sparsity optimization by~\citet{SPGL}, and later extended
to gauge optimization~\citep{BergFriedlander:2011}
and to general convex programming~\citep{AravkinBurkeFriedlander:2013}. 

The second approach we consider is the TFOCS algorithm~\citep{TFOCS} and software\footnote{\url{http://cvxr.com/tfocs}}, which is based on the proximal point algorithm, and can also handle generic convex minimization problems. 
We present a new analysis of TFOCS, along with stronger convergence guarantees, and 
also apply TFOCS method to RPCA. TFOCS solves all standard variants of RPCA and SPCP, 
and can easily add non-negativity or other types of additional constraints. 
We briefly detail how the algorithm can be specialized for the RPCA problem in particular.

The chapter proceeds as follows. In Section~\ref{section-analysis}, we provide the necessary convex analysis background to understand our algorithms and results. 
In Section~\ref{sec:variational}, we look at level set techniques in the context of the RPCA problem; in particular 
we describe previous work and algorithms for SPCP and RPCA in Section~\ref{P1C3:sec:literature},
discuss computationally efficient projections as optimization workhorses in Section~\ref{sec:projections},
and develop new accelerated projected quasi-Newton methods for the flipped 
and Lagrangian formulations in Section~\ref{sec:QN}. 
We then present a view of dual smoothing, describe the TFOCS algorithm, and show to to apply it to RPCA 
in Section~\ref{section-TFOCS}. We describe the general class of problems 
solvable by TFOCS in Section~\ref{sec:GenTFOCS}, detail the dual smoothing approach in Section~\ref{sec:dualSmoothing}, 
and present new convergence results in Section~\ref{sec:convergence}.
Finally, we demonstrate the efficacy of the new solvers and the overall formulation
on synthetic problems  and real data problems in Section~\ref{section-Experiments}.

\section{Convex Analysis Background} \label{section-analysis}
We work in finite dimensional spaces $\R^n$ (with Euclidean inner product) unless otherwise specified;
we note however that much of the general theory below generalizes immediately to Hilbert spaces and some of it to Banach spaces. 
Standard definitions are not referenced, but can be found in convex analysis textbooks~\citep{RTR,RTRW} or in review papers such as \citet{CombettesPesquetChapter}.

\subsection{Key definitions}
In this section, we provide definitions of objects that we use throughout the chapter. 

We work with functions that take on values from the extended real line $\Rbar := \R \cup \{ \infty \}$. 
For example, we define the indicator function as follows:
\begin{definition}[Indicator Function of a set $C$\index{Indicator function}]
\[
\chi_{C}(x) =  \begin{cases} 0 & x \in C \\ +\infty & x\notin C \end{cases} \]
\end{definition}
and thus for any functional $f$ on $\R^n$,
\[
\min_{x \in C}\; f(x) \;=\; \min_{x\in\R^n}\; f(x) + \chi_{C}(x).
\]
This allows a unified treatment of constraints and objectives by encoding constraints using indicator functions. 

The class $\Gamma_0(\R^n)$ denotes \textbf{convex}\index{convex}, \textbf{lower semi-continuous}\index{lower semi-continuous} (lsc), \textbf{proper}\index{proper} functionals from $\R^n$ to $\Rbar$. A function is lsc if and only if its graph is closed, and in particular a continuous function is lsc. 
A proper function is not identically equal to $+\infty$ and is never $-\infty$. We write $\dom f = \{x \mid f(x) <\infty\}$.
Further background is widely available, e.g., \citep{RTR,CombettesPesquetChapter}.

\begin{definition}[Subdifferential and subgradient\index{subdifferential}\index{subgradient}]
    Let $f\in\Gamma_0(\R^n)$, then the \textbf{subdifferential} of $f$ at the point $x\in \dom f$ is the set 
    \[ \partial f(x) = \{ d \in \R^n \mid \forall y\in\R^n,\; f(y) \ge f(x) + \iprod{d}{y-x} \} \]
and elements of the set are known as \textbf{subgradients}. 
\end{definition}
The sub-differential of an indicator function $\chi_C$ at $x$ is the normal cone to $C$ at $x$.
Fermat's rule is that $x\in\argmin \,f(x)$ iff $0\in\partial f(x)$, which follows by the definition of the subdifferential. If $f,g\in\Gamma_0(\R^n)$, then $\partial(f+g) = \partial f + \partial g$ in many situations (i.e., under constraint qualifications such as $f$ or $g$ having full domain).
In finite dimensions, G\^ateaux and Fr\'echet differentiability coincide on $\Gamma_0(\R^n)$ and $\partial f(x) = \{ d \}$ iff $f$ is differentiable at $x$ with $\nabla f(x)=d$.

We now introduce a key generalization of projections that will be used widely. 
\begin{definition}[Proximity operator\index{proximity operator}]
If $f\in \Gamma_0(\R^n)$ and $\lambda>0$, define
\[
\prox_{\lambda f}(y) = \argmin_x\; \lambda f(x)  +\frac{1}{2}\|x-y\|^2
 = (I+\lambda \partial f)^{-1}(y)
\]
\end{definition}    
Note that even though $\partial f$ is potentially multi-valued, the proximity operator is always uniquely defined (if $f\in\Gamma_0(\R^n)$), since it is the minimizer of a strongly convex function.  When we say that a proximity operator for $f$ is easy to compute, we mean that the proximity operator for $\lambda f$ is easy to compute for all $\lambda >0$. 
Computational complexity will be explored in more detail in subsequent sections.

The proximity operator generalizes projection, since $\prox_{\chi_{C}}(y) = \proj_C(y)$ where $\proj$ denotes orthogonal projection onto a set. Another example is the proximity operator of the $\ell^1$ norm, which is equivalent to soft-thresholding. The proximity operator is firmly non-expansive~\citep{CombettesPesquetChapter}, just like orthogonal projections. 

\begin{definition}[(Fenchel-Legendre) Conjugate function\index{conjugate function}]
    Let $f\in \Gamma_0(\R^n)$, then the conjugate function $f^*$ is defined
    \[ f^*(y) = \sup_{x}\; \iprod{x}{y} - f(x). \]
Furthermore, $\partial f^* = (\partial f)^{-1}$, where $(\cdot)^{-1}$ is the pre-image operation, and $f^{**} = f$.
\end{definition} 

\begin{definition}[Gauge]\index{gauge}
For a convex set $C$ containing the origin, 
the gauge $\gauge{x}{C}$ is defined by
\begin{equation}
\label{eq:gauge}
\gauge{x}{C} = \inf_\lambda \{\lambda: x \in \lambda C\}.
\end{equation}
\end{definition}

For any norm $\|\cdot\|$, the set defining it as a gauge is simply the unit ball $\mathbb{B}_{\|\cdot\|} = \{x: \|x\|\leq 1\}$.
Gauges are useful in our computational context since they easily allow some extensions, such as inclusion of non-negativity constraints. 

We make extensive use of the theory of dual functions. For example, 
if one can compute $\prox_f$, then one can compute $\prox_{f^*}$ and related quantities as well, using
\begin{align}\label{eq:dualProx}
\prox_{f^*}(x) = x - \prox_f(x) \\
\prox_{\check{f}}(x) = -\prox_f(-x) \label{eq:dualNegative}
\end{align}
where $\check{f}(x)=f(-x)$.

\begin{definition}[Relative interior]\index{relative interior}
The \textbf{relative interior} (ri)\index{relative interior} of a set $C\subset \R^n$ is the interior of $C$ relative to its affine hull\index{affine hull} (the smallest affine space containing $C$).
\end{definition}

\begin{definition}[Lipschitz continuity]\index{Lipschitz continuity}
A function $F:\R^n \rightarrow \R^m$ is Lipschitz continuous with constant $\Lipschitz$ if $\Lipschitz$ is the smallest real number such that for all $x,x' \in \R^n$,
\[
\| F(x) - F(x') \|_{\R^m} \le \Lipschitz\|x-x'\|_{\R^n}.
\]
\end{definition}

\section{Level-set methods for Residual-Constrained SPCP}
In this section, we discuss a range of convex formulations for SPCP,
their relationships, and survey prior art. We then show
how to apply level set methods to several of the formulations. 

\label{sec:variational}

\subsection{A primer on SPCP}

We  illustrate \eqref{eq:sum-SPCP} and \eqref{eq:max-SPCP} via different convex formulations. Flipping the objective and the constraints in~\eqref{eq:max-SPCP} and~\eqref{eq:sum-SPCP}, 
we obtain the following convex programs
\begin{equation} \label{eq:sum-SPCP-flip}  \tag{$\text{flip-SPCP}_\text{sum}$}
\begin{aligned}
& \min_{L,S}\; \rho(L+S-\obs)  \\
& \text{s.t.}\quad \matrixNorm{L}_* + \lambdaSum\|S\|_1 \le \tauSum
\end{aligned}
\end{equation}
\begin{equation} \label{eq:max-SPCP-flip}  \tag{$\text{flip-SPCP}_\text{max}$}
\begin{aligned}
& \min_{L,S}\; \rho(L+S-\obs)  \\
& \text{s.t.}\quad \max(\matrixNorm{L}_*, \lambdaMax\|S\|_1) \le \tauMax
\end{aligned}
\end{equation}
Solutions of~\eqref{eq:sum-SPCP-flip} and~\eqref{eq:max-SPCP-flip} 
are implicitly related to the solutions of~\eqref{eq:sum-SPCP} and~\eqref{eq:max-SPCP} via the Pareto frontier
by~\citet[Theorem 2.1]{AravkinBurkeFriedlander:2013}. While in many applications, 
$\rho(\cdot)$ is taken to be the 2-norm squared, the relationship holds in general. 
For the range of parameters where the constraints in~\eqref{eq:sum-SPCP} and~\eqref{eq:max-SPCP} are {\it active}, 
for any parameter $\eps$ there exist corresponding parameters $\tauSum(\eps)$ and $\tauMax(\eps)$, 
for which the optimal value of~\eqref{eq:sum-SPCP-flip} and~\eqref{eq:max-SPCP-flip} is $\eps$, 
and the corresponding optimal solutions $(\overline S_s, \overline L_s)$ and $(\overline S_m, \overline L_m)$ are also optimal for~\eqref{eq:sum-SPCP} 
and~\eqref{eq:max-SPCP}. 

For completeness, we also include the Lagrangian formulation:
\begin{equation} \label{eq:lag-SPCP} \tag{lag-SPCP}
\min_{L,S}\; \lambdaL \matrixNorm{L}_* + \lambdaS \|S\|_1 + \frac{1}{2}\|L+S-\obs\|_F^2 
\end{equation}

Problems
\eqref{eq:max-SPCP-flip} and \eqref{eq:sum-SPCP-flip} 
can be solved using projected gradient optimal projected gradient methods. 
The disadvantage of some of these formulations is that it is again not as clear how to tune the parameters. 
We show that one can solve~\eqref{eq:max-SPCP} and~\eqref{eq:sum-SPCP} 
using a sequence of flipped problems;
moreover this approach inherits the computational complexity guarantees 
of \eqref{eq:max-SPCP-flip} and \eqref{eq:sum-SPCP-flip}~\citep{aravkin2016level}.
In practice, better tuning also leads to faster algorithms, 
e.g., fixing $\eps$ ahead of time to an estimated 
`noise floor' greatly reduces the amount of required computation 
if parameters are to be selected via cross-validation. 

Finally, in some cases, it is useful to change the $\rho(L+S-\obs)$ term to $\rho(\AAA(L+S-\obs))$ 
where $\AAA$ is a linear operator. For example, let $\Omega$ be a subset of the indices of a $m \times n$ matrix. We may only observe $\obs$ restricted to these entries, denoted $\Proj_\Omega(\obs)$, in which case we choose $\AAA=\Proj_\Omega$. Most existing RPCA/SPCP algorithms adapt to the case $\AAA=\Proj_\Omega$ but this is due to the strong properties of the projection operator $\Proj_\Omega$. The advantage of our approach is that it seamlessly handles arbitrary 
linear operators $\AAA$. 

\subsection{Prior Art}
\label{P1C3:sec:literature}
While problem \eqref{eq:sum-SPCP} with $\eps=0$ has several solvers (e.g., it can be solved by applying the widely known 
Alternating Directions Method of Multipliers (ADMM)/ Douglas-Rachford method \citep{Combettes2007}), 
the formulation assumes the data are noise free. Unfortunately, the presence of noise we consider in this paper introduces a third term in the ADMM framework, 
where extra care must be taken to develop a convergent variant of ADMM~\citep{chen2013direct}. 
Interestingly,  there are only a handful of methods that can handle this case. 
Those using smoothing techniques no longer promote exactly sparse and/or exactly low-rank solutions. Those using dual decomposition techniques 
may require high iteration counts. 
Because each step requires a partial singular value decomposition (SVD) of a large matrix, 
it is critical that the methods only take a few iterations.

As a rough comparison, we start with related solvers that solve \eqref{eq:sum-SPCP} for  $\eps=0$.
\citet{RPCA_algo_Wright} solves an instance of \eqref{eq:sum-SPCP} with $\eps=0$ and a $800 \times 800$ system in $8$ hours. By switching to the \eqref{eq:lag-SPCP} formulation, \citet{GaneshRPCA} uses the accelerated proximal gradient method~\citep{BecTeb09} to solve a $1000 \times 1000$ matrix in under one hour. This is improved further in \citet{lin2010augmented} which again solves \eqref{eq:sum-SPCP} with $\eps=0$ using the augmented Lagrangian and ADMM methods and solves a $1500\times 1500$ system in about a minute. As a prelude to our results, our method can solve some systems of this size in about $10$ seconds (c.f.,~Fig.~\ref{P1C3fig:1}).

In the case of \eqref{eq:sum-SPCP} with $\eps > 0$, \citet{ASALM} propose the alternating splitting augmented Lagrangian method (ASALM), which exploits separability 
of the objective in the splitting scheme, and can solve a $1500 \times 1500$ system in about five minutes. 

The partially smooth proximal gradient (PSPG) approach of \citet{AybatRPCA} smooths just the nuclear norm term and then applies the well-known FISTA algorithm~\citep{BecTeb09}. 
\citet{AybatRPCA} show that the proximity step can be solved efficiently in closed-form, 
and the dominant cost at every iteration is that of the partial SVD.
They include some examples on video,  
solving $1500 \times 1500$ formulations in under half a minute.

The nonsmooth adaptive Lagrangian (NSA) algorithm of~\citet{Aybat2013} is a variant of the ADMM for \eqref{eq:sum-SPCP}, and makes use of the insight of~\citet{AybatRPCA}. 
The ADMM variant is interesting in that it splits the variable $L$, rather than the sum $L+S$ or residual $L+S-\obs$. 
Their experiments solve a 1500 $\times$ 1500 synthetic problems in between 16 and 50 seconds (depending on accuracy) .

\citet{Shen2014} develops a method exploiting low-rank matrix factorization scheme, maintaining $L = UV^T$.
This technique has also been effectively used in practice for matrix completion~\citep{JasonLee,Aravkin2013}, but lacks a full 
convergence theory in either context.  The method of~\citet{Shen2014} was an order of magnitude faster than ASALM, 
but encountered difficulties in some experiments where the sparse component dominated the low rank component. 
We note that the factorization technique may potentially speed up some of the methods presented here, but we leave this 
to future work, and only work with convex formulations.

\subsection{Level set methods for Sum-SPCP and Max-SPCP}

Recall that both~\eqref{eq:sum-SPCP} and~\eqref{eq:max-SPCP} can be written as follows:
\begin{equation}
\label{eq:general}
\min \phi(L, S) \quad \text{s.t.} \quad \rho\left(L + S - \obs\right) \leq \eps. 
\end{equation}
Earlier, we discussed that both $\phi$ and $\rho$ can be chosen by the modeler; 
in particular sum and max formulations come from choosing $\phiSum$ vs. $\phiMax$.
While classic formulations assume $\rho$ to be the Frobenius norm,  
this restriction is not necessary, and we consider $\rho$ to be smooth and convex. 
In particular, $\rho$ can be taken to be the robust Huber penalty~\citep{Hub}. 
Even more importantly, this formulation allows pre-composition of a smooth convex penalty 
with an arbitrary linear operator $\mathcal L$.  
In particular, note that the RPCA model is described by a simple linear operator: 
\begin{equation}
\label{eq:ids}
L + S = \begin{bmatrix} I & I \end{bmatrix} \begin{bmatrix} L \\ S \end{bmatrix}.
\end{equation}
Projection onto a set of observed indices $\Omega$ is also a simple linear operator that can be included in $\rho$. 
Operators may include different transforms (e.g., Fourier) applied to either $L$ or $S$. 

The problem class~\eqref{eq:general} falls into the class of problems studied by~\citet{BergFriedlander:2011,SPGL}
for $\rho(\cdot) = \|\cdot\|^2$ and by~\citet{AravkinBurkeFriedlander:2013} for arbitrary convex $\rho$. 
Following these references, we define  the {\it value function} $v(\tau)$ as
\begin{equation}
\label{eq:value}
v(\tau) = \min_{L, S} \rho\left(\mathcal L (L, S) - \obs\right) \quad
\text{s.t. } \phi(L,S) \leq \tau.
\end{equation}
This value function provides the bridge between formulations of type~\eqref{eq:general} and their `flipped'
counterparts. Specifically,  one can use Newton's method to find a solution to $v(\tau) = \eps$. 
The approach is agnostic to the linear operator $\mathcal L$ (it can be of the simple form~\eqref{eq:ids}, or include restriction 
in the missing data case, etc.).

For both formulations of interest, $\phi$ is a norm defined on a product space 
$\mathbb{R}^{n\times m} \times \mathbb{R}^{n\times m}$, since we can write 
\begin{eqnarray}
\phiSum(L, S) &= \left\|\begin{matrix} \matrixNorm{L}_* \\ \lambdaSum \|S\|_1 \end{matrix}\right\|_1, \\
\phiMax(L,S) &=  \left\|\begin{matrix} \matrixNorm{L}_* \\ \lambdaMax \|S\|_1 \end{matrix}\right\|_\infty.
\end{eqnarray}
In particular, both $\phiSum(L,S)$ and $\phiMax(L,S)$ are gauges as well as norms, 
and since we are able to treat this level of generality, we focus our theoretical results on this wider class.  

In order to implement Newton's method for~\eqref{eq:value}, the optimization problem to evaluate $v(\tau)$ 
must be solved (fully or approximately) to obtain $(\overline L, \overline S)$. 
Then the $\tau$ parameter for the next~\eqref{eq:value} problem is updated via 
\begin{equation}
\label{eq:newton}
\tau^{k+1} = \tau^k - \frac{v(\tau^k) - \epsilon}{v'(\tau^k)}. 
\end{equation}
Given $(\overline L, \overline S)$, $v'(\tau)$ can be written in closed form using
\citet[Theorem 5.2]{AravkinBurkeFriedlander:2013}, which simplifies to 
\begin{equation}
\label{eq:vprime}
v'(\tau) = -\phi^\circ(\mathcal{L}^T\nabla \rho(\mathcal{L} (\overline L, \overline S) - \obs)), 
\end{equation}
with $\phi^\circ$ denoting the polar gauge to $\phi$. 
The polar gauge is precisely $\gauge{x}{C^\circ}$, with
\begin{equation}
\label{eq:polar}
C^\circ = \{v: \left \langle v, x\right\rangle \leq 1 \quad \forall x \in C\}.
\end{equation}

In the simplest case, where $\mathcal L$ is given by~\eqref{eq:ids}, and $\rho$
is the least squares penalty, 
the formula~\eqref{eq:vprime} becomes 
\[
v'(\tau) = -\phi^\circ\left(\begin{bmatrix} \overline L + \overline S - \obs \\ \overline L + \overline S - \obs  \end{bmatrix}\right). 
\]

The main computational challenge in the approach outlined in~\eqref{eq:value}-\eqref{eq:vprime} 
is to design a fast solver to evaluate $v(\tau)$. Section~\ref{sec:QN} does just this. 

The key to RPCA is that the regularization functional $\phi$ 
is a gauge over the product space used to decompose $\obs$ into summands $L$ and $S$. 
This makes it straightforward to compute polar results for both $\phiSum$ and $\phiMax$.

\begin{theorem}[Max-Sum Duality for Gauges on Product Spaces]
    \label{thm:product-gauge}
    Let $\gamma_1$ and $\gamma_2$ be gauges on $\mathbb{R}^{n_1}$ and $\mathbb{R}^{n_2}$, and consider the function 
    \[
    g(x,y) = \max\{\gamma_1(x), \gamma_2(y)\}. 
    \]
    Then $g$ is a gauge, and its polar is given by 
    \[
    g^\circ(z_1,z_2) = \gamma_1^\circ(z_1) + \gamma_2^\circ(z_2). 
    \]
\end{theorem}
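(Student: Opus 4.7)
My plan is to work directly from the definition of the polar gauge via its support-function characterization, after first checking that $g$ is indeed a gauge. The key structural fact is that the unit ball of $g$ is a Cartesian product, which makes the sup in the polar decouple into two independent suprema.

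First, I would verify that $g$ is a gauge. Since each $\gamma_i$ is nonnegative, positively homogeneous, convex, and vanishes at the origin (these being the defining properties of a gauge, equivalent to~\eqref{eq:gauge} for a convex set containing $0$), and since the pointwise maximum preserves each of these properties, $g$ is a gauge. In particular, its unit sublevel set is
\[
C_g \defeq \{(x,y) : g(x,y) \le 1\} = C_1 \times C_2,
\]
where $C_i = \{u : \gamma_i(u) \le 1\}$ is the unit ball of $\gamma_i$.

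Second, I would compute $g^\circ$ using the support-function representation of the polar: for any gauge $\gamma$ with unit ball $C$,
\[
\gamma^\circ(z) = \sup\{\iprod{z}{u} : u \in C\},
\]
which follows from \eqref{eq:polar}. Applying this to $g$ on $C_g = C_1 \times C_2$,
\[
g^\circ(z_1,z_2) = \sup_{(x,y) \in C_1 \times C_2}\bigl(\iprod{z_1}{x} + \iprod{z_2}{y}\bigr).
\]
Because the constraint set is a Cartesian product and the objective is additively separable in $x$ and $y$, the joint sup splits:
\[
g^\circ(z_1,z_2) = \sup_{x \in C_1}\iprod{z_1}{x} + \sup_{y \in C_2}\iprod{z_2}{y} = \gamma_1^\circ(z_1) + \gamma_2^\circ(z_2),
\]
which is the desired formula.

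I do not anticipate a real obstacle: the whole argument rests on the elementary observation that the unit ball of a pointwise maximum of gauges on disjoint coordinate blocks is the product of the individual unit balls, after which separability of the sup is immediate. The only point requiring a little care is the equivalence between the polar $C^\circ$ in~\eqref{eq:polar} and the support-function formula for $\gamma^\circ$; this is a standard gauge-duality identity that I would cite from \citet{RTR} rather than reprove. An alternative, equally short route would be to show the reverse direction (that the unit ball of $\gamma_1^\circ + \gamma_2^\circ$ equals $C_g^\circ$) and invoke bipolarity to conclude, but the direct computation above is more transparent.
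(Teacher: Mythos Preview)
Your proposal is correct and follows essentially the same route as the paper: identify the unit ball of $g$ as the Cartesian product $C_1\times C_2$, invoke the standard identity that the polar gauge equals the support function of the unit ball (the paper cites \citet[Corollary~15.1.2]{Roc:70}), and then split the sup over the product. The only cosmetic difference is that the paper verifies $g$ is the gauge of $C_1\times C_2$ directly from the definition~\eqref{eq:gauge}, whereas you argue via preservation of gauge properties under pointwise max and then read off the unit sublevel set.
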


\begin{proof}
    Let $C_1$ and $C_2$ denote the canonical sets corresponding to gauges $\gamma_1$ and $\gamma_2$. It immediately follows 
    that $g$ is a gauge for the set $C = C_1 \times C_2$, since 
    \[
    \begin{aligned}
    \inf\{\lambda \geq 0| (x,  y) \in \lambda C\} &= \inf\{\lambda |  x \in \lambda C_1 \text{ and }y \in \lambda C_2\} \\
    &= \max \{\gamma_1(x), \gamma_2(y)\}.
    \end{aligned}
    \]
    By~\citet[Corollary 15.1.2]{Roc:70}, the polar of the gauge of $C$ is the support function of $C$, which is given by 
    \[
    \begin{aligned}
    \sup_{x \in C_1, y \in C_2} \left\langle(x, y), (z_1,z_2) \right \rangle &= \sup_{x \in C_1} \left\langle x, z_1 \right\rangle +  \sup_{y \in C_2} \left\langle y, z_2 \right\rangle \\
    &= \gamma_1^\circ(z_1) + \gamma_2^\circ(z_2).
    \end{aligned}
    \]
\end{proof}\vspace{-5mm}

This theorem allows us to easily compute the polars for $\phiSum$ and $\phiMax$ in terms of the polars of $\matrixNorm{\cdot}_*$ and $\|\cdot\|_1$,  
which are the dual norms of the spectral norm and infinity norm, respectively. 

\begin{corollary}[Explicit variational formulas for~\eqref{eq:sum-SPCP} and~\eqref{eq:max-SPCP}]
    \label{cor:explicit-polar}
    We have 
    \begin{equation}
    \label{eq:polarFormulas}
    \begin{aligned}
    \phiSum^\circ(Z_1, Z_2) &= \max\left\{\matrixNorm{Z_1}_2, \frac{1}{\lambdaSum} \|Z_2\|_\infty\right\}\\
    \phiMax^\circ(Z_1, Z_2) &= \matrixNorm{Z_1}_2 + \frac{1}{\lambdaMax} \|Z_2\|_\infty,
    \end{aligned}
    \end{equation}
    where $\matrixNorm{X}_2$ denotes the spectral norm (square root of the largest eigenvalue of $X^TX$)
    and $\|X\|_\infty$ denotes the largest entry in absolute value. 
    
\end{corollary}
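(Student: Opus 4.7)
My plan is to apply Theorem~\ref{thm:product-gauge} twice, once directly for $\phiMax$ and once in the polar direction for $\phiSum$, after handling the scalar multipliers $\lambdaMax$ and $\lambdaSum$ via the standard scaling rule for polar gauges.

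First I would record two facts I will use repeatedly. The polar of a nonnegatively scaled gauge satisfies $(\lambda\gamma)^\circ = \lambda^{-1}\gamma^\circ$ for $\lambda>0$; this follows from the definition~\eqref{eq:polar} since the set associated with $\lambda\gamma$ is $\lambda^{-1}C$ and $(\lambda^{-1}C)^\circ = \lambda C^\circ$. The individual polar identities I need are the standard Schatten duality $\matrixNorm{\cdot}_*^\circ = \matrixNorm{\cdot}_2$ and the $\ell_1/\ell_\infty$ duality $\|\cdot\|_1^\circ = \|\cdot\|_\infty$, both of which hold because these are norms and the polar of a norm is its dual norm.

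For $\phiMax$, the formula falls out directly: $\phiMax$ is precisely $\max\{\gamma_1,\gamma_2\}$ with $\gamma_1 = \matrixNorm{\cdot}_*$ and $\gamma_2 = \lambdaMax\|\cdot\|_1$, so Theorem~\ref{thm:product-gauge} gives
\[
\phiMax^\circ(Z_1,Z_2) = \matrixNorm{Z_1}_2 + (\lambdaMax\|\cdot\|_1)^\circ(Z_2) = \matrixNorm{Z_1}_2 + \frac{1}{\lambdaMax}\|Z_2\|_\infty.
\]

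For $\phiSum$, the function is a sum rather than a max, so the theorem does not apply directly; this is the main (mild) obstacle. The cleanest route is to apply the theorem in reverse via the bipolar theorem. Define $\tilde\gamma_1 = \matrixNorm{\cdot}_2$ and $\tilde\gamma_2 = \lambdaSum^{-1}\|\cdot\|_\infty$, and let $g(Z_1,Z_2) = \max\{\tilde\gamma_1(Z_1),\tilde\gamma_2(Z_2)\}$. By Theorem~\ref{thm:product-gauge} and the scaling rule, $g^\circ(L,S) = \tilde\gamma_1^\circ(L) + \tilde\gamma_2^\circ(S) = \matrixNorm{L}_* + \lambdaSum\|S\|_1 = \phiSum(L,S)$. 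Since $\phiSum$ is a closed gauge (being a norm on the product space), the bipolar theorem yields $\phiSum^\circ = g^{\circ\circ} = g$, which is exactly the claimed formula. The only thing to verify is that $g$ and $\phiSum$ are closed gauges with the origin in their canonical sets, which is immediate because both are norms.
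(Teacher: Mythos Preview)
Your proposal is correct and follows the same route the paper intends: both formulas are obtained from Theorem~\ref{thm:product-gauge} together with the standard dual-norm identities and the scaling rule for polars. The paper states the corollary without a separate proof, simply remarking that the theorem ``allows us to easily compute the polars for $\phiSum$ and $\phiMax$''; your write-up is actually more careful than the paper in that you make explicit the bipolar step needed for $\phiSum$ (since Theorem~\ref{thm:product-gauge} as stated gives the polar of a max, not of a sum), whereas the paper leaves that symmetry implicit.
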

We now have closed form solutions for $v'(\tau)$ in~\eqref{eq:vprime} for both formulations of interest.  
The remaining challenge is to design a fast solver for~\eqref{eq:value}
for formulations~\eqref{eq:sum-SPCP} and \eqref{eq:max-SPCP}.  We focus on this challenge in the remaining sections of the paper. 
We also discuss the advantage of~\eqref{eq:max-SPCP} from this computational perspective. 

\subsection{Projections}
\label{sec:projections}
In this section, we consider the computational issues of projecting onto the set defined by $\phi(L,S) \le \tau$. For $\phiMax(L, S) = \max(\matrixNorm{L}_*, \lambdaMax \|S\|_1)$ this is straightforward since the set is just the product set of the nuclear norm and $\ell_1$ norm balls, and efficient projectors onto these are known. In particular, projecting an $m \times n$ matrix (without loss of generality let $m \le n$) 
onto the nuclear norm ball takes $\order( m^2n )$ operations, and
projecting it onto the $\ell_1$ ball can be done on $\order( mn )$ operations using fast median-finding algorithms~\citep{knapsack1984,Duchi2008}.

For $\phiSum(L, S) = \matrixNorm{L}_* + \lambdaSum \|S\|_1$, the projection is no longer straightforward. Nonetheless, 
the following lemma shows this projection can be efficiently implemented. This lemma appears in \citet[Section 9]{BergFriedlander:2011}, but supply a proof here since it is not well-known. We begin with a basic proposition:
\begin{proposition}
 Projection onto the scaled $\ell_1$ ball, that is, $\{ x \in \R^d \mid \sum_{i=1}^d \alpha_i |x_i| \le 1 \}$ for some $\alpha_i > 0$, can be done in $\order( d \log(d) ) $ time.
\end{proposition}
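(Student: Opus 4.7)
The plan is to reduce the projection to a one-dimensional root-finding problem over a piecewise-linear monotone function, then sort breakpoints and sweep.

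First I would write out the KKT conditions. The projection is
\[
\min_{y \in \R^d} \tfrac{1}{2}\|y - x\|_2^2 \quad \text{s.t.} \quad \sum_{i=1}^d \alpha_i |y_i| \le 1.
\]
Introducing a Lagrange multiplier $\lambda \ge 0$ for the ball constraint and using the fact that at the optimum $\mathrm{sign}(y_i^*)=\mathrm{sign}(x_i)$ whenever $y_i^*\ne 0$ (which one checks by a swap argument), stationarity yields the soft-thresholding form
\[
y_i^*(\lambda) = \mathrm{sign}(x_i)\,\max\bigl(|x_i| - \lambda \alpha_i,\, 0\bigr).
\]
If $x$ is already feasible, then $\lambda=0$ and we return $y=x$; the feasibility check is $\order(d)$. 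Otherwise, by complementary slackness the constraint is active, so $\lambda$ is the unique solution of
\[
f(\lambda) \defeq \sum_{i=1}^d \alpha_i\,\max\bigl(|x_i| - \lambda \alpha_i,\, 0\bigr) = 1.
\]

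Next, I would observe that $f$ is continuous, non-increasing, piecewise linear on $[0,\infty)$, with $f(0) = \sum_i \alpha_i |x_i| > 1$ and $f(\lambda)\to 0$ as $\lambda\to\infty$, so a unique root exists. The breakpoints are precisely the values $t_i \defeq |x_i|/\alpha_i$ for $i=1,\dots,d$. On any interval between two consecutive sorted breakpoints, the active set $I(\lambda) = \{i : t_i > \lambda\}$ is constant, and $f$ reduces to the affine function $\sum_{i\in I(\lambda)}\alpha_i |x_i| - \lambda \sum_{i \in I(\lambda)} \alpha_i^2$.

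The algorithm therefore proceeds as follows: (i) sort the breakpoints $t_1,\ldots,t_d$ in decreasing order, costing $\order(d \log d)$; (ii) sweep through the sorted list, incrementally maintaining the running sums $S_1 = \sum_{i\in I} \alpha_i |x_i|$ and $S_2 = \sum_{i\in I}\alpha_i^2$ by adding one term per step, which costs $\order(1)$ per step and $\order(d)$ total; (iii) at each step, check whether the affine piece $S_1 - \lambda S_2$ crosses $1$ within the current interval, and if so solve the linear equation in $\order(1)$ to recover $\lambda$; (iv) apply the soft-threshold formula above to recover $y^*$ in $\order(d)$. The dominant cost is the sort, giving $\order(d \log d)$ overall.

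The only nontrivial step is the KKT derivation and the sign-preservation argument that reduces the problem to the non-negative orthant; once that is in place, the piecewise-linear structure and the linear sweep are standard, exactly parallel to the classical $\ell_1$-ball projection of \citet{Duchi2008}, with $\alpha_i^2$ in place of unit weights. (A randomized median-of-medians selection could replace the sort to obtain $\order(d)$ expected time, but the claimed $\order(d\log d)$ bound follows immediately from sorting.)
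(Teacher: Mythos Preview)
Your proof is correct and follows essentially the same approach the paper sketches: the paper states (without details) that the solution depends on a single scalar parameter found by sorting $(|x_i|/\alpha_i)$ and then performing appropriate summations, and you have fleshed this out precisely by identifying the scalar as the Lagrange multiplier $\lambda$, the sorted quantities as the breakpoints $t_i = |x_i|/\alpha_i$, and the summations as the running totals $S_1,S_2$ in the linear sweep. Your version is considerably more detailed than the paper's one-sentence remark, but the underlying argument is the same.
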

We conjecture that fast median-finding ideas could reduce this to $\order(d)$ in theory, the same as the optimal complexity for the $\ell_1$ ball. 
The proof of the proposition follows by noting that the solution can be written in a form depending only on a single scalar parameter, and this scalar can be found by sorting $(|x_i|/\alpha_i)$ followed by appropriate summations.
Armed with the above proposition, we state an important lemma below. For our purposes, we may think of $S$ as a vector in $\R^{mn}$ rather than a matrix in $\R^{m\times n}$.

\begin{lemma} \label{lemma:jointProjection}
Let $L=U\Sigma V^T$ and $\Sigma = \diag(\sigma)$, and let $(S_i)_{i=1}^{mn}$ be any ordering of the elements of $S$. Then the projection of $(L,S)$ onto the $\phiSum$ ball is $(U\diag(\hat{\sigma})V^T,\hat{S})$, where $(\hat{\sigma},\hat{S})$ is the projection onto the scaled $\ell_1$ ball $\{ (\sigma,S) \mid\, \sum_{j=1}^{\min(m,n)} |\sigma_j| + \sum_{i=1}^{mn} \lambdaSum |S_i| \le 1 \}$.
\end{lemma}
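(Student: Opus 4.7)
The plan is to reduce the joint projection over a product of matrices to a single vector projection by exploiting that the nuclear norm and the Frobenius norm are unitarily invariant. Writing the projection as
\[
\min_{L', S'}\; \tfrac{1}{2}\|L' - L\|_F^2 + \tfrac{1}{2}\|S' - S\|_F^2 \quad \text{s.t.}\quad \matrixNorm{L'}_* + \lambdaSum \|S'\|_1 \le 1,
\]
I first separate the structure of the two summands: the constraint is a sum of a function depending only on the singular values of $L'$ and a function depending only on the entries of $S'$, while the objective is already separable between $L'$ and $S'$.

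The key step is to show that, at the optimum, $L'$ shares the left/right singular vectors $U, V$ of $L$. For this I would invoke von Neumann's trace inequality: for any $L' \in \mathbb{R}^{m\times n}$ with singular values $\sigma'$ (in decreasing order),
\[
\|L' - L\|_F^2 = \|L'\|_F^2 - 2\,\mathrm{tr}(L^T L') + \|L\|_F^2 \ge \|\sigma'\|_2^2 - 2\iprod{\sigma}{\sigma'} + \|\sigma\|_2^2 = \|\sigma' - \sigma\|_2^2,
\]
with equality attained by $L' = U \diag(\sigma') V^T$. Since the nuclear norm depends on $L'$ only through $\sigma'$, we may restrict attention to $L'$ of this form without changing the objective or the feasible constraint value. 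This reduces the problem to
\[
\min_{\sigma' \ge 0,\, S'}\; \tfrac{1}{2}\|\sigma' - \sigma\|_2^2 + \tfrac{1}{2}\|S' - S\|_2^2 \quad \text{s.t.}\quad \sum_{j} \sigma'_j + \lambdaSum \sum_i |S'_i| \le 1,
\]
where the vectors $\sigma'$ and $S'$ are stacked into a single ambient vector and $\sigma$ is understood as a non-negative vector.

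It remains to show the non-negativity constraint on $\sigma'$ may be dropped, so that the problem is exactly projection onto the scaled $\ell_1$ ball described in the lemma. This follows because projecting onto a scaled $\ell_1$ ball acts coordinate-wise via a thresholded shrinkage of the form $\hat{\sigma}_j = \mathrm{sign}(\sigma_j)(|\sigma_j| - \mu)_+$ (for some multiplier $\mu \ge 0$), so non-negative inputs remain non-negative. Reassembling $L' = U \diag(\hat{\sigma}) V^T$ and identifying the entries $\hat{S}$ gives the claimed formula.

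The main obstacle is the reduction to the singular-value problem: one has to be careful that the optimum is attained at an $L'$ with the same singular vectors as $L$, especially when $L$ has repeated singular values (in which case $U,V$ are not unique but any consistent choice works). The rest is a routine consequence of the structure of the scaled $\ell_1$ projection, which, as stated in the preceding proposition, can itself be computed in $\order(d\log d)$ time by sorting.
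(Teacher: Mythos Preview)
Your proof is correct and follows essentially the same approach as the paper: both reduce the matrix projection to a scaled $\ell_1$ projection on the vector $(\sigma, S)$ by showing that the optimal $L'$ shares the singular vectors of $L$. The paper reaches this via a nested minimization that recognizes the inner problem as the standard nuclear-norm-ball projection (soft-thresholding of singular values), whereas you argue it directly from von Neumann's trace inequality; these are the same reduction, with yours being the more explicit justification of why the singular vectors are inherited.
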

\begin{proof}[Sketch of proof]
    We need to solve
    \begin{equation*}
    \min_{ \{ (L',S') \mid \, \phiSum(L',S') \le 1 \} }    
    \,\frac{1}{2}\norm{L'-L}_F^2 
    \end{equation*}
    Alternatively, solve
    \[
    \min_{S'}\, \left(\min_{\{L' \mid\, \matrixNorm{L'}_* \le 1 - \lambdaSum\|S'\|_1 \}}  \,\frac{1}{2}\norm{L'-L}_F^2 + \frac{1}{2}\norm{S'-S}_F^2\right).
    \]
The inner minimization is equivalent to projecting onto the nuclear norm ball, and this is well-known to be soft-thresholding of the singular values. Since it depends only on the singular values, recombining the two minimization terms gives exactly a joint projection onto a scaled $\ell_1$ ball.
\end{proof}

All the references to the $\ell_1$ ball can be replaced by the intersection of the $\ell_1$ ball and the non-negative cone, and the projection is still efficient.
As noted in Section~\ref{sec:variational}, imposing non-negativity constraints is covered by the gauge results of Theorem~\ref{thm:product-gauge} and Corollary~\ref{cor:explicit-polar}. 
Therefore,  the level set framework can be efficiently applied to this interesting case.


\subsection{Solving the max `flipped' sub-problem via projected quasi-Newton methods}
\label{sec:QN}

If we adopt $\phiMax$ as the regularizer, then the subproblem~\eqref{eq:max-SPCP-flip}
takes the explicit form 
\begin{equation} \begin{aligned}
& \min_{L,S}\; \frac{1}{2}\|L+S-\obs\|_F^2  \\
& \text{s.t.}\quad \matrixNorm{L}_* \leq \tauMax, \quad \|S\|_1 \le \tauMax/\lambdaSum.
\end{aligned}
\end{equation}
The computational bottle-neck is solving this problem quickly, once at each outer iteration. 
While first-order methods can be used, we can exploit the structure of the objective 
by using quasi-Newton methods. 
The main challenge here is that for the $\matrixNorm{L}_*$ term, it is tricky to deal with a weighted quadratic term 
(whereas for $\|S\|_1$, we can obtain a low-rank Hessian and solve it efficiently via coordinate descent).

Let $X=(L,S)$ be the full variable, so we can write the objective function as $f(X)=\frac{1}{2}\|\AAA(X)-\obs\|_F^2$. 
To simplify the exposition, we take  $\AAA=(I, I)$, 
but the presented approach applies to general linear operators (including terms like $\proj_\Omega$). 
The matrix structure of $L$ and $S$ is not yet important here, so we can think of them as 
reshaped vectors instead of matrices.

The gradient is $\nabla f(X) = \AAA^T( \AAA(X)-\obs )$. For convenience, we use $r(X) = \AAA(X)-\obs$ and
\[
\nabla f(X) = \pmat{ \nabla_L f(X) \\ \nabla_S f(X) } = 
\AAA^T\pmat{r(X)\\r(X)}, \quad r_k \equiv r(X_k). 
\]

The Hessian is  $\AAA^T\AAA = \pmat{I&I\\I&I}$.
We cannot simultaneously project $(L,S)$ onto their constraints with this Hessian scaling (doing so would solve the original problem!), since the Hessian removes separability.
Instead, we use $(L_k,S_k)$ to approximate the cross-terms.

The true function is a quadratic, so the following quadratic expansion around $X_k=(L_k,S_k)$ is exact:  
\[
\begin{aligned}
f(L,S) = f(X_k) &+ \iprod{\pmat{\nabla_L f(X_k)\\\nabla_S f(X_k)}}{\pmat{L-L_k\\S-S_k}} \\
&+ \frac{1}{2}\iprod{\pmat{L-L_k\\S-S_k}}{\nabla^2 f\pmat{L-L_k\\S-S_k}} \\
= f(X_k) &+ \iprod{\pmat{r_k\\r_k}}{\pmat{L-L_k\\S-S_k}} \\
&+ \frac{1}{2}\iprod{\pmat{L-L_k\\S-S_k}}{\pmat{I&I\\I&I}\pmat{L-L_k\\S-S_k}} \\
= f(X_k)& + \iprod{\pmat{r_k\\r_k}}{\pmat{L-L_k\\S-S_k}} \\
&+ \frac{1}{2}\iprod{\pmat{\mathbf{L}-L_k\\\mathbf{S}-S_k}}{\pmat{L-L_k+\mathbf{S}-S_k\\\mathbf{L}-L_k+S-S_k}}
\end{aligned}
\]
The coupling of the second order terms, shown in bold, prevents direct 1-step minimization of $f$, subject to the nuclear and 1-norm constraints. 
The FISTA method~\citep{BecTeb09} 
replaces 
the Hessian $\pmat{I&I\\I&I}$ with the upper bound $2\pmat{I&0\\0&I}$, 
which solves the coupling issue, but potentially loses too much second order information. 
For \eqref{eq:sum-SPCP-flip}, FISTA is about the best we can do (we actually use SPG~\citep{Wright2009} which did slightly better in our tests).
However, for \eqref{eq:max-SPCP-flip}---and for \eqref{eq:lag-SPCP},
which has no constraints but rather non-smooth terms, which can be treated like constraints
using proximity operators---the constraints are uncoupled
and we can take a ``middle road'' approach, replacing
\[
\iprod{\pmat{\mathbf{L}-L_k\\\mathbf{S}-S_k}}{\pmat{L-L_k+\mathbf{S}-S_k\\\mathbf{L}-L_k+S-S_k}}
\]
with
\[
\iprod{\pmat{L-L_k\\S-S_k}}{\pmat{L-L_k+\mathbf{S_k-S_{k-1}}\\\mathbf{L_{k+1}-L_{k}}+S-S_k}}.
\]
The first term is decoupled, allowing us to update $L_k$, and then this is plugged into the second term in a Gauss-Seidel fashion. In practice, we also scale this second-order term with a number slightly greater than $1$ but less than $2$ (e.g.,~$1.25$) which leads to more robust behavior. We expect this ``quasi-Newton'' trick to do well when $S_{k+1}-S_k$ is similar to $S_k - S_{k-1}$.

\section{Dual Smoothing and the Proximal Point method} \label{section-TFOCS}
\newcommand{\ff}{f} 
\newcommand{\FF}{F} 

This section describes the approach of the TFOCS algorithm~\citep{TFOCS} and its implementation\footnote{\url{http://cvxr.com/tfocs}}. 
The method is based on the proximal point algorithm and handles generic convex minimization problems. 
The original analysis in \citet{TFOCS} was in terms of convex \emph{cones}, 
but we re-analyze the method here in terms of extended valued convex \emph{functions}, and find stronger results. 

Secondly, we compare to alternatives in the literature; the basic ingredients involved in TFOCS are well-known in the optimization community, 
and there are many variants and applications. We discuss in detail the relationship with the family 
of preconditioned ADMM methods popularized by \citet{ChambollePock10}.  
The TFOCS algorithm also motivated the work of~\citealt*{DoubleSmoothing}, 
which promotes an alternative approach that smooths both the primal and the dual. 
This approach appeared to obtain stronger guarantees, but there is a price to pay,  
since in the context of sparse optimization, smoothing the primal necessarily yields a less sparse solution. 
We show that with the improved analysis of TFOCS, TFOCS in fact enjoys the same strong guarantees, 
while avoiding smoothing the primal. 

Finally, we apply this method to RPCA. The TFOCS formulation is flexible 
and can solve all standard variants of RPCA and SPCP, as well as incorporate  
non-negativity or other types of additional constraints. 
We briefly detail how the algorithm can be specialized for the RPCA problem.
Even without specializing the algorithm for RPCA, TFOCS has performed well. 
The results of tests from \citet{Bouwmans201422} are that ``LSADM~\citep{ShiqianFastADM} and TFOCS~\citep{TFOCS} 
solvers seem to be the most adapted ones in the field of video surveillance.''

\subsection{General form of our optimization problem}
\label{sec:GenTFOCS}
In this section, we provide a general notation that captures all optimization problems of interest. 
Note that even though we do not explicitly write constraints in this formulation, 
through the use of extended-value functions we capture constraints, and so 
in particular can express residual-constrained formulations using this notation. 

We consider the following generic problem
\begin{equation} \label{eq:generic1}
\min_x\; \smoothFcn(x) + \proxFcn_0(x) + \sum_{i=1}^m \proxFcn_i( \linear_i x - \offset_i )
\end{equation}
where 
\begin{itemize}
    \item $\smoothFcn$ and $\proxFcn_i$ for $i=0,\ldots,m$ are proper convex lsc functions on their respective spaces,
    \item $\smoothFcn$ is differentiable everywhere, with Lipschitz continuous gradient; note that we can consider $\smoothFcn( \linear x - \offset)$ trivially, since this is also differentiable,
    \item $\proxFcn_i$ for $i=0,\ldots,m$ has an easily computable proximity function,
    \item $\linear_i$ for $i=1,\ldots,m$ is a linear operator, and $b_i$ is a constant offset.
\end{itemize}
We distinguish $\proxFcn_0$ from $\proxFcn_i$, $i\ge 1$, since $\proxFcn_0$ is not composed with a linear operator. This is significant since being able to easily compute the proximity operator of $\proxFcn$ does not imply one can easily compute the proximity operator of $\proxFcn_i \circ \linear_i$ nor of $\proxFcn_i + \proxFcn_j$, so we deal with the $i=1,\ldots, m$ terms specially.

\begin{remark} \label{remark_offset}
Unlike the $\linear_i$ terms, the offsets $\offset_i$ can be absorbed into $\proxFcn_i$, since if $\widetilde{\proxFcn}(x) = \proxFcn(x-\offset)$   
then $\prox_{\widetilde{\proxFcn}}(x) = \offset + \prox_{\proxFcn}(x-\offset)$~\citep{CombettesPesquetChapter}. Thus we make these offsets explicit or implicit as convenient.
\end{remark}

RPCA in the general setting \eqref{eq:generalRPCA} can be recovered from the above by setting $x=(L,S)$, $\proxFcn_0(x) = \phi(L,S)$, $m=1$ and $\proxFcn_1 = \rho$ with $\linear_1 x = -L-S$ and $b_1=-\obs$, and $\smoothFcn=0$. In fact, many convex problems from science and engineering fit into this framework~\citep{CombettesPesquetChapter}. The strength of this particular model is that it is often easy to decompose a complicated function $f$ by a finite sum of  simple functions $\proxFcn_i$ composed with linear operators. In this case, $f$ many not be differentiable, and $\prox_f$ need not be easy to compute, so the model allows us to exploit the structure of the smaller building blocks.

\subsection{Dual smoothing approach}
\label{sec:dualSmoothing}
We re-derive the algorithm described in \citet{TFOCS}, but from a \emph{conjugate-function} viewpoint, whereas \citet{TFOCS} used a 
\emph{dual-conic} viewpoint. The later viewpoint is subsumed in the former, and is arguably less elegant.

Consider the problem 
\begin{equation} \label{eq:generic2}
\min_x\;  f(x) := \proxFcn_0(x) + \sum_{i=1}^m \proxFcn_i( \linear_i x - \offset_i )
\end{equation}
which is similar to \eqref{eq:generic1} but without the differentiable term $\smoothFcn$. In addition to our previous assumptions on these functions (convex, lsc, proper), we now assume that at least one minimizer exists (guaranteed if any function is coercive, or any set is bounded).

Our main observation is that instead of solving \eqref{eq:generic2} directly, 
we can instead use the proximal point method to minimize $f$, 
which exploits the fact that
\begin{equation}
\min_x \; \ff(x) = \min_{y} \; \left(\min_x \; \ff(x) + \frac{\mu}{2}\|x-y\|^2\right).
\end{equation}
This fact follows since $\ff(x) \le \ff(x) + \frac{\mu}{2}\|x-y\|^2$, and equality
is achieved by setting $y$ to one  of the minimizers of $\ff$.

Thus we solve a sequence of problems of the form
\begin{equation} \label{eq:smoothed}
\min_x \; \ff(x) + \frac{\mu}{2}\|x-y\|^2.
\end{equation}
for a fixed $y$. The exact proximal point method is
\begin{equation}\label{eq:proximalPoint}
y_{k+1} = \argmin_x\; \FF_{k}(x) := f(x) + \frac{\mu_k}{2}\|x-y_k\|^2
\end{equation}
where $\mu_k$ is any sequence such that $\limsup \mu_k <\infty$, and $y_0$ is arbitrary. Note that  $\FF_k$ depends on $\mu_k$ and $y_k$.

The benefit of \eqref{eq:smoothed} over \eqref{eq:generic2} is that $\FF_k$ is strongly convex, whereas $\ff$ need not be, and therefore the dual problem of $\FF_k$ is easy to solve, which we will make precise. 

Rewriting the objective, and ignoring offset terms $\offset_i$ for simplicity (see Remark~\ref{remark_offset}),
we have 
\begin{equation}\label{eq:smoothed2}
\min_x \; \underbrace{\proxFcn_0(x) + \frac{\mu}{2}\|x-y\|^2}_{\bigProxFcn(x)} + \sum_{i=1}^m \proxFcn_i(\linear_i x ).
\end{equation}
For $i=1,\ldots,m$, each $\linear_i$ is a linear operator from $\R^n$ to $\R^{m_i}$.
We can further simplify notation by defining a linear operator $\bigLinear$
and a vector $\bigX \in \R^{\sum_{i=1}^m m_i}$ (e.g., $\bigX = \bigLinear(X)$)
such that 
\[
\bigLinear(x) = \begin{pmatrix} L_1(x) \\ L_2(x) \\ \vdots \\ L_m(x) \end{pmatrix},
\quad
\bigX = \begin{pmatrix} z_1 \\ z_2 \\ \vdots \\ z_m \end{pmatrix}
\]
Then define 
\begin{equation}\label{eq:bigProx}
 \bigDualFcn(\bigX) = \sum_{i=1}^m \proxFcn_i(z_i),\;\text{so}\;
 \prox_{\bigDualFcn}(\bigX) = \begin{pmatrix} \prox_{\proxFcn_1}(z_1) \\ \prox_{\proxFcn_2}(z_2) \\ \vdots \\ \prox_{\proxFcn_m}(z_m) \end{pmatrix}
 \;\text{and}\;
 \prox_{\bigDualFcn^*}(\bigX) = \begin{pmatrix} \prox_{\proxFcn_1^*}(z_1) \\ \prox_{\proxFcn_2^*}(z_2) \\ \vdots \\ \prox_{\proxFcn_m^*}(z_m) \end{pmatrix}.
\end{equation}

We now rewrite \eqref{eq:smoothed2} in the following compact representation
\begin{equation} \label{eq:primal1}
\min_{x}\; \bigProxFcn(x) + \bigDualFcn(\bigLinear x).
\end{equation}
We are now in position to apply standard Fenchel-Rockafellar duality~\citep{RTR,Bauschke2011}
 to arrive at the dual problem
\begin{equation}\label{eq:dual} 
\min_{\dual} \; \underbrace{\bigProxFcn^*(\bigLinear^* \dual ) + \bigDualFcn^*(-\dual)}_{\dualFcn(\dual)}.
\end{equation}
Standard constraint qualifications for finite dimensional problems (e.g., Thm.\ 15.23 and Prop.\ 6.19x in \citet{Bauschke2011}) guarantee a zero duality gap if 
\[
\text{ri}\left(\dom \bigDualFcn\right) \cap \bigLinear \left( \text{ri}\left(\dom \bigProxFcn\right) \right) \neq \emptyset.
\]

The primal problem \eqref{eq:primal1} is not amenable to computation because even though we can calculate the proximity operators of $\bigDualFcn$ and $\bigProxFcn$, we cannot easily calculate the proximity operator of $\bigDualFcn \circ \bigLinear$. The dual formulation \eqref{eq:dual} circumvents this because instead of asking for the proximity operator of $\bigProxFcn^* \circ \bigLinear^*$, which is not easy, we will use its gradient, and in this case the linear term causes no issue. We can do this because $\bigProxFcn$ is at least $\mu$ strongly convex, so we have the following well-known result (see e.g., Prop.\ 12.60 in \citet{RTRW}).
\begin{lemma} 
The function $\bigProxFcn^*$ is continuously differentiable and the gradient is Lipschitz continuous with constant $\mu^{-1}$, and hence $\bigProxFcn^*\circ \bigLinear^*$ is also continuously differentiable with Lipschitz constant $\|\bigLinear\|^2/\mu$.
\end{lemma}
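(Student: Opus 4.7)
The plan is to deduce both assertions from the standard duality between strong convexity and Lipschitz-continuous gradients, and then apply the chain rule to handle the composition with $\bigLinear^*$.

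First I would observe that $\bigProxFcn(x) = \proxFcn_0(x) + \frac{\mu}{2}\|x-y\|^2$ lies in $\Gamma_0(\R^n)$ and is $\mu$-strongly convex, since it is the sum of the convex function $\proxFcn_0$ and the $\mu$-strongly convex quadratic $\frac{\mu}{2}\|x-y\|^2$.

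Next I would establish the self-contained fact: for any $\mu$-strongly convex $h \in \Gamma_0(\R^n)$, the conjugate $h^*$ is finite and continuously differentiable everywhere, with $\|\nabla h^*(u_1) - \nabla h^*(u_2)\| \le \mu^{-1}\|u_1-u_2\|$. The argument proceeds in two steps. (i) For each $u$, the map $x \mapsto \iprod{u}{x} - h(x)$ is strongly concave, hence has a unique maximizer $x(u)$, so $h^*(u)$ is finite. The identity $\partial h^* = (\partial h)^{-1}$, recorded in the preliminaries, then gives $\partial h^*(u) = \{x(u)\}$, a singleton; combined with the fact (also stated in the preliminaries) that $\partial f(x)=\{d\}$ iff $f$ is differentiable at $x$ with $\nabla f(x)=d$, this shows $h^*$ is differentiable with $\nabla h^*(u) = x(u)$, and continuity of $\nabla h^*$ then follows from the monotonicity estimate in (ii). (ii) Setting $x_i = \nabla h^*(u_i)$ so that $u_i \in \partial h(x_i)$, adding the two subgradient inequalities for $h$ with $x=x_1,y=x_2$ and $x=x_2,y=x_1$ and using $\mu$-strong convexity yields the strong monotonicity $\iprod{u_1-u_2}{x_1-x_2} \ge \mu \|x_1-x_2\|^2$, after which Cauchy--Schwarz gives $\|x_1 - x_2\| \le \mu^{-1}\|u_1-u_2\|$.

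Finally, applying this with $h = \bigProxFcn$, the chain rule yields
\[
\nabla(\bigProxFcn^* \circ \bigLinear^*)(\dual) = \bigLinear\, \nabla\bigProxFcn^*(\bigLinear^* \dual),
\]
and combining the Lipschitz estimate above with the operator-norm bound (recalling $\|\bigLinear^*\| = \|\bigLinear\|$) gives
\[
\|\bigLinear\, \nabla\bigProxFcn^*(\bigLinear^*\dual_1) - \bigLinear\, \nabla\bigProxFcn^*(\bigLinear^*\dual_2)\| \;\le\; \|\bigLinear\| \cdot \mu^{-1} \cdot \|\bigLinear^*(\dual_1-\dual_2)\| \;\le\; \frac{\|\bigLinear\|^2}{\mu}\,\|\dual_1-\dual_2\|,
\]
which is the claimed bound. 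The only step needing mild care is the identification $\partial h^* = (\partial h)^{-1}$ in the strongly convex setting, but this is exactly the fact quoted in the preliminaries; everything else is a routine assembly of standard convex-analytic ingredients, so I do not anticipate a genuine obstacle.
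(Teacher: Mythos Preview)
Your argument is correct and follows the standard route: the paper itself does not prove this lemma but simply cites it as a well-known fact (Prop.~12.60 in Rockafellar--Wets), so your proposal is exactly the unpacking of that reference. The only minor remark is that you could also note finiteness of $h^*$ follows automatically once $h$ is proper and strongly convex (the supremum is attained), which you do implicitly; otherwise there is nothing to add.
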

Note we are taking the operator norm of $\bigLinear$,
and $\matrixNorm{\bigLinear}^2 = \matrixNorm{\bigLinear \bigLinear^*}
= \sum_{i=1}^n \matrixNorm{\linear_i}^2$. The actual gradient is can be determined 
by exploiting the relation
\[
\nabla \bigProxFcn^*(w) = \partial \bigProxFcn^*(w) = \partial \bigProxFcn^{-1}(w),
\]
which follows from Fenchel's equality (see~\citet[Theorem 23.5]{RTR}),
i.e., if $x=\nabla \bigProxFcn^*(w)$ 
then $0 \in \partial \bigProxFcn(x) - w$, so $x$ minimizes $\bigProxFcn(\cdot) - \< w, \cdot \>$.
 Thus
\begin{align}
\nabla \bigProxFcn^*(w) &= \argmin_{x} \; \bigProxFcn(x) - \<x,w\> \notag \\
&= \argmin_x\; \proxFcn_0(x) + \frac{\mu}{2}\|x-y\|^2 - \<x,w\> \notag \\
&= \argmin_x\; \proxFcn_0(x) + \frac{\mu}{2}\|x-(y+w/\mu)\|^2  \notag \\
&= \prox_{\proxFcn_0/\mu}\left( y + w/\mu \right) \label{eq:gradient}
\end{align}
so we can calculate the gradient using $\prox_{\proxFcn_0}$. Furthermore, via the chain rule,
we have $\nabla (\bigProxFcn^* \circ \bigLinear^* )(\bigX)
= \bigLinear \nabla \bigProxFcn^*( \bigLinear^* \bigX )$.

Thus we have shown that the dual problem \eqref{eq:dual} is a sum of two functions, 
one of which has a Lipschitz continuous gradient and the other admits an easily computable proximity operator. 
Such problems can be readily solved via proximal gradient methods~\citep{ComWaj:05} and accelerated proximal gradient methods~\citep{BecTeb:09,BeckTeboulle14}.

If strong duality holds, then if $\bigX^\star$ is a solution of the dual problem, $x^\star = \nabla \bigDualFcn^*(\bigLinear^* \bigX^\star )$ is the unique solution to the primal problem (cf.~\citet[Prop.\ 19.3]{Bauschke2011}), and this was used in \citet{TFOCS} to motivate solving the dual. We actually have much stronger results that provide approximate optimality guarantees for approximate dual solutions. 

\begin{algorithm} 
\begin{algorithmic}[1]
\REQUIRE $\Lipschitz \ge \matrixNorm{\bigLinear}^2/\mu$ bound on Lipschitz constant; $\bigX_0$ arbitrary
\STATE $\bigY_1 = \bigX_0$, $t_1 = 1$.
\FOR{ $\;k = 1, 2, \ldots$}
\STATE Compute $\widetilde{x}_k=\nabla \bigProxFcn^*( \bigLinear^* \bigY_k)$ using \eqref{eq:gradient}
\STATE Set $\mathbf{G}=\bigLinear \widetilde{x}_k = \bigLinear \nabla \bigProxFcn^*( \bigLinear^* \bigY_k)$
\STATE $\bigX_k = -\prox_{\Lipschitz^{-1} \bigDualFcn^*}\left( -\bigY_k + \Lipschitz^{-1} \mathbf{G} \right)$ using \eqref{eq:bigProx} and \eqref{eq:dualProx}
\STATE $t_{k+1} = \frac{1+\sqrt{1+4t_k^2}}{2} \ge t_k+1/2$
\STATE $\bigY_{k+1} = \bigX_k + \frac{t_k-1}{t_{k+1}}\left( \bigX_{k} - \bigX_{k-1} \right)$
\ENDFOR
\end{algorithmic}
\caption{TFOCS method~\citep{TFOCS}, i.e.,~FISTA~\citep{BecTeb:09} applied to \eqref{eq:dual}; enforcing $t_k=1$ recovers proximal gradient descent}
\label{alg:fista}
\end{algorithm}

We can bound the rate of convergence of the dual objective function $q$:
\begin{theorem}[Thm.\ 4.4 in \citet{BecTeb:09}]
The sequence $(\bigX_k)$ generated by Algorithm~\ref{alg:fista} satisfies
\[
q(\bigX_k) - \min_{\bigX} q(\bigX) \le \frac{2 \Lipschitz d_0^2 }{k^2}
\]
where $d_0$ is the distance from $\bigX_0$ to the optimal set.
\end{theorem}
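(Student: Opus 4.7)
The plan is to recognize that Algorithm~\ref{alg:fista} is precisely the FISTA scheme of~\citet{BecTeb:09} applied to the dual splitting $q(\bigX) = f(\bigX) + g(\bigX)$, with $f(\bigX) \defeq \bigProxFcn^*(\bigLinear^* \bigX)$ smooth and $g(\bigX) \defeq \bigDualFcn^*(-\bigX)$ prox-friendly, and then to invoke their Theorem~4.4 as a black box. The work is therefore entirely in verifying the three hypotheses that license this invocation: correct Lipschitz constant of $\nabla f$, computability of $\prox_{\tau g}$, and exact matching of the iterate updates.

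First I would check the smooth part. The lemma preceding the algorithm already gives that $\bigProxFcn^*$ is differentiable with $\mu^{-1}$-Lipschitz gradient because $\bigProxFcn$ is $\mu$-strongly convex. The chain rule then yields $\nabla f(\bigX) = \bigLinear \nabla \bigProxFcn^*(\bigLinear^* \bigX)$, which is $\matrixNorm{\bigLinear \bigLinear^*}/\mu = \matrixNorm{\bigLinear}^2/\mu$-Lipschitz; this is bounded above by the constant $\Lipschitz$ required in the algorithm, and the quantity $\mathbf{G}$ computed in lines 3--4 is exactly this gradient at $\bigY_k$ via the closed form~\eqref{eq:gradient}. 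Next I would verify the proximal step. The function $g$ is the composition of $\bigDualFcn^*$ with the negation $\bigX \mapsto -\bigX$, hence proper, lsc, and convex. Applying identity \eqref{eq:dualNegative} to $h = \bigDualFcn^*$ gives $\prox_{\tau g}(\bigX) = -\prox_{\tau \bigDualFcn^*}(-\bigX)$; substituting this into the canonical FISTA proximal gradient step $\bigX_k = \prox_{\Lipschitz^{-1} g}\bigl(\bigY_k - \Lipschitz^{-1} \nabla f(\bigY_k)\bigr)$ reproduces line~5 verbatim. The block-separable structure in \eqref{eq:bigProx} keeps this prox cheap. Lines~6--7 then execute Nesterov's standard momentum update, so the match with FISTA is exact.

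The only additional ingredient needed for the statement to be nontrivial is that the dual minimum is attained, so that $d_0 \defeq \dist(\bigX_0, \argmin q)$ is finite. This is guaranteed by Fenchel--Rockafellar duality under the constraint qualification $\mathrm{ri}(\dom \bigDualFcn) \cap \bigLinear(\mathrm{ri}(\dom \bigProxFcn)) \neq \emptyset$ noted after~\eqref{eq:dual}, together with the standing assumption that the primal problem has at least one minimizer. With attainment and the three verifications above in hand, Theorem~4.4 of~\citet{BecTeb:09} applies directly and delivers the claimed $\order(1/k^2)$ bound with constant $2\Lipschitz d_0^2$. The main obstacle is essentially bookkeeping: matching the sign conventions between the dual problem (where $g$ carries the negation inside $\bigDualFcn^*$) and FISTA's canonical $f+g$ form; this is cleanly handled by the negation identity in~\eqref{eq:dualNegative}, which ensures that the $-\prox_{\Lipschitz^{-1} \bigDualFcn^*}$ appearing in line~5 is not an ad hoc choice but a forced consequence of the standard FISTA step applied to $g$.
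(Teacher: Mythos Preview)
Your proposal is correct and matches the paper's approach: the paper does not supply its own proof but simply cites the result from \citet{BecTeb:09}, having already set up Algorithm~\ref{alg:fista} as FISTA applied to the dual splitting~\eqref{eq:dual}. Your write-up makes explicit the hypothesis checks (Lipschitz constant of $\nabla(\bigProxFcn^*\circ\bigLinear^*)$, the negation identity~\eqref{eq:dualNegative} for the prox step, and dual attainment via the constraint qualification) that the paper leaves implicit, which is exactly the bookkeeping needed to license the black-box invocation.
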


From this, we can recover a remarkable bound on the primal sequence.
\begin{theorem}[Thm.\ 4.1 in \citet{BeckTeboulle14}] \label{thm:inner}
Let $(x_k)$ be the sequence generated by
\[ x_k=\nabla \bigProxFcn^*( \bigLinear^* \bigX_k)
\] 
(similar to $\widetilde{x}_k$ but evaluated at $\bigX_k$ not $\bigY_k$).
Let $x^\star$ be the (unique) optimal point to \eqref{eq:primal1}.
Then
\[
\frac{\mu}{2}\|x_k - x^\star\|^2 \le q(\bigX_k) - \min_{\bigX} q(\bigX)
\]
for any point $\bigX_k$, and hence for $(\bigX_k)$ from Algorithm~\ref{alg:fista},
\begin{equation}\label{eq:rate1}
\|x_k - x^\star\| \le \frac{2 \sqrt{\Lipschitz} d_0 }{\sqrt{\mu} k}.
\end{equation}
\end{theorem}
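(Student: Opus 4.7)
The plan is to derive the primal bound from the dual gap via the $\mu$-strong convexity of $\bigProxFcn$ combined with strong duality. The rate \eqref{eq:rate1} then falls out of the previous theorem by a one-line substitution, so the real content is the first inequality $\frac{\mu}{2}\|x_k - x^\star\|^2 \le q(\bigX_k) - \min q$.

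First I would record two applications of Fenchel--Young. Because $\bigProxFcn$ is $\mu$-strongly convex and $x_k = \nabla \bigProxFcn^*(\bigLinear^* \bigX_k)$ (so $\bigLinear^* \bigX_k \in \partial \bigProxFcn(x_k)$ and Fenchel's equality holds at the pair $(x_k, \bigLinear^*\bigX_k)$), the strengthened Fenchel--Young inequality gives, for any $x$,
\[
\bigProxFcn^*(\bigLinear^* \bigX_k) + \bigProxFcn(x) - \langle x,\, \bigLinear^* \bigX_k\rangle \;\ge\; \tfrac{\mu}{2}\|x - x_k\|^2.
\]
Separately, the ordinary Fenchel--Young inequality applied to $\bigDualFcn$ at the pair $(\bigLinear x, -\bigX_k)$ gives
\[
\bigDualFcn^*(-\bigX_k) + \bigDualFcn(\bigLinear x) \;\ge\; -\langle \bigLinear x,\, \bigX_k\rangle
\;=\; -\langle x,\, \bigLinear^* \bigX_k\rangle.
\]

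Next I would set $x = x^\star$ and add the two inequalities. The cross terms $\pm\langle x^\star,\bigLinear^*\bigX_k\rangle$ cancel, leaving
\[
\bigl(\bigProxFcn^*(\bigLinear^*\bigX_k) + \bigDualFcn^*(-\bigX_k)\bigr) + \bigl(\bigProxFcn(x^\star) + \bigDualFcn(\bigLinear x^\star)\bigr) \;\ge\; \tfrac{\mu}{2}\|x^\star - x_k\|^2.
\]
The first grouping is exactly $q(\bigX_k)$, and the second is the primal objective $P(x^\star)$ at the optimum. Under the constraint qualification already invoked in Section~\ref{sec:dualSmoothing}, strong duality gives $P(x^\star) = -\min_{\bigX} q(\bigX)$, so the left-hand side is $q(\bigX_k) - \min_{\bigX} q(\bigX)$, establishing the primal--dual inequality.

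Finally, to obtain \eqref{eq:rate1} I would just feed the rate bound $q(\bigX_k) - \min q \le 2\Lipschitz d_0^2 / k^2$ from the preceding theorem into the inequality just proved, giving $\tfrac{\mu}{2}\|x_k - x^\star\|^2 \le 2\Lipschitz d_0^2/k^2$ and hence $\|x_k - x^\star\| \le 2\sqrt{\Lipschitz}\, d_0/(\sqrt{\mu}\,k)$. The one subtle point, and the step I would be most careful about, is justifying the strengthened Fenchel--Young inequality for $\bigProxFcn$: this uses the fact that $\mu$-strong convexity of $\bigProxFcn$ is equivalent to $\mu^{-1}$-smoothness of $\bigProxFcn^*$ (already noted in the lemma just above), together with the identity $\bigProxFcn^*(w) + \bigProxFcn(\nabla \bigProxFcn^*(w)) = \langle w, \nabla \bigProxFcn^*(w)\rangle$ coming from Fenchel's equality. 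Everything else is bookkeeping.
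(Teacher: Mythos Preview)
Your proof is correct. The strengthened Fenchel--Young inequality you use is exactly right: $\mu$-strong convexity of $\bigProxFcn$ together with Fenchel's equality at the pair $(x_k,\bigLinear^*\bigX_k)$ gives the quadratic lower bound, and adding the ordinary Fenchel--Young inequality for $\bigDualFcn$ at $x=x^\star$ cancels the cross terms as you describe. Strong duality then identifies $P(x^\star)=-\min_{\bigX} q(\bigX)$, yielding the claimed inequality, and the rate \eqref{eq:rate1} follows by substitution.

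Note, however, that the paper does not supply its own proof of this theorem: it is quoted as Thm.~4.1 of \citet{BeckTeboulle14} and used as a black box. So there is no ``paper's proof'' to compare against. Your argument is essentially the standard one from that reference, and it is self-contained and complete; if anything, you have filled in what the paper omits.
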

Note that in practice, one typically uses $\widetilde{x}_k$ for any convergence tests, since it is a by-product of computation, whereas $x_k$ is expensive to compute. Since $0\le (t_k-1)/t_{k+1} < 1$ for $t_k \in [1,\infty)$, then if $\bigX_k$ converges, it follows that $\bigY_k$ converges to the same limit, so asymptotically $x_k \simeq \widetilde{x}_k$.

The result of the above theorem holds regardless of how the sequence $(\bigX_k)$ is generated, so if the dual method has better than worst-case convergence (or if an acceleration such as a  line search is used), then the primal sequence enjoys the same improvements.

Since $\sqrt{\Lipschitz} = \matrixNorm{\bigLinear}/\sqrt{\mu}$, combining with the other factor of $1/\sqrt{\mu}$ shows that $\|x_k - x^\star\| \propto 1/(\mu k)$, so choosing $\mu$ large leads to fast convergence (since the dual problem is very smooth). The trade-off is 
 that the outer loop (the proximal point method) will converge more slowly with $\mu$ large.

\subsection{Comparison with literature}
\label{sec-literature}

\paragraph{Dual methods}
The proposed method has been formulated in several contexts; part of the novelty of~\citet{TFOCS} is the generality of the method 
and pre-built function routines from which a wide variety of functions could be constructed. 
The basic concepts of duality and smoothing are widely used, and using duality to avoid difficult affine terms goes back to Uzawa's method (see \citep{Ciarlet89}) and the general concept of \emph{domain decomposition}\index{domain decomposition}.

More recent and specific approaches include those of \citep{combettesDualization,LiuToh09,PPPA,Necoura08} which particularly deal with signal processing problems. The work \citet{combettesDualization} considers a single smoothed problem, not the full proximal point sequence, and uses proximal gradient descent to solve the dual. They establish convergence of the primal variable but without a bound on the convergence rate. Explicit use of the proximal point algorithm is mentioned in \citet{LiuToh09}, which focuses on the nuclear norm minimization problem, but uses Newton-CG methods, which requires a third level of algorithm hierarchy and good heuristic values for stopping criteria of the conjugate-gradient method. 
The algorithm SDPNAL~\citep{SDPNAL} is similar to \citet{LiuToh09} and uses a Newton-CG augmented Lagrangian framework to solve general semi-definite programs (SDP). 
The work of \citet{PPPA} focuses on a more specific version of the problem but contains the same general ideas, and has inner and outer iterations. The algorithm in \citet{Necoura08} focuses on standard ADMM settings that have non-trivial linear terms, and smooths the dual problem; they follow \citet{Nesterov05} and have specific complexity bounds when the appropriate constraint set is compact.

\paragraph{Primal-dual methods}
Another method to remove effects of the linear terms $\linear_i$ is to solve a primal-dual formulation.
Many of these are based on duplicating variables into a product space and then enforcing \emph{consensus}\index{consensus} of the duplicated variables. This can replace many terms, such as $\sum_{i=1}^m \proxFcn_i(\linear_i x)$, with two generalized functions, which is inherently easier, since it is often amenable to the Douglas-Rachford algorithm~\citep{Combettes2007}.
Specific examples of this approach are  
\citep{CombettesPesquet12,combettes2013systems,infConv_paper,boct2013solving,botSIOPT,botMathProg,botFBF}.
The paper by \citet{MonotoneSkew} is slightly unique in that it reformulates the primal-dual problem into one that can be solved by the obscure forward-backward-forward algorithm of Tseng (the forward-backward algorithm does not apply since in the primal-dual setting, Lipschitz continuity does not imply co-coercivity).

Another main line of primal-dual methods was 
motivated in \citet{preconditionedADMM} 
 as a preconditioned variant of ADMM and then analyzed in \citet{ChambollePock10},
 and an improved analysis by \citet{HeYuan10} allowed a generic formulation
 to be proposed independently by \citet{Vu2011,Condat2011}.
 In more particular settings, it is known as \emph{linearized} ADMM\index{linearized ADMM} or primal-dual hybrid gradient\index{primal-dual hybrid gradient} ({PDHG}), and has seen a recent surge of interest and analysis 
\citep{NIPS2014_5494_Volkan,HeYuanADMMrate,LinADMM_LiEtAl,LinADMM_RenEtAl,LinADMM_WangYuan,LinADMM_YangYuan,LinADMM_ZhangBurgerOsher,YALL1,pdhg_2013}.
Several recent survey papers~\citep{SPmag,Duality2015} review these algorithms in more detail.

The PDHG has not been applied specifically to the RPCA problem to our knowledge. The next section describes this method in more detail.

\paragraph{Detailed comparison with PDHG}
On certain classes of problems, our approach is quite similar to the PDHG approach.
Consider the following simplified version of \eqref{eq:generic2}:
\[
\min_x \proxFcn_0(x) + \proxFcn_1( \linear x - \offset )
\]
At each step of the proximal point algorithm, we minimize the above objective perturbed by $\mu/2\|x-y\|^2$, where $\mu>0$ is arbitrary. Applying FISTA to the dual problem leads to steps of the following form:
\begin{align*}
x_{k+1} &= \argmin_x \; \proxFcn_0(x) - \iprod{ \overline{\smalldual}}{ \linear x - \offset } + \frac{\mu}{2}\|x-y\|^2 \\
\smalldual_{k+1} &= \argmin_{\smalldual} \; \proxFcn_1^*(\smalldual) - \iprod{ \smalldual}{\linear x_{k+1} } + \frac{1}{2t}\|\smalldual - \smalldual_k\|^2 \\
\overline{\smalldual} &= \smalldual_{k+1} + \theta_k\left( \smalldual_{k+1} - \smalldual_k \right)
\end{align*}
where $\theta_k = (t_k-1)/t_{k-1}$ as in Algorithm~\ref{alg:fista} (and $\lim_{k\rightarrow \infty} \theta_k=1$). We require the stepsize $t$ to satisfy $t \le \mu/\Lipschitz^2$.

For the PDHG method, pick stepsizes $\tau \sigma < 1/\Lipschitz^2$. There is no outer loop over $y$, and the full algorithm is:
\begin{align*}
\smalldual_{k+1} &= \argmin_{\smalldual} \; \proxFcn_1^*(\smalldual) - \iprod{ \smalldual}{\linear \overline{x} } + \frac{1}{2\sigma}\|\smalldual - \smalldual_k\|^2 \\
x_{k+1} &= \argmin_x \; \proxFcn_0(x) - \iprod{ \overline{\smalldual}}{ \linear x - \offset } + \frac{1}{2\tau}\|x-x_k\|^2 \\
\overline{x} &= x_{k+1} + \theta\left( x_{k+1} - x_k \right)
\end{align*}
with $\theta=1$ (see Algorithm~1 in \citet{ChambollePock10}).

The two algorithms are extremely similar, the main differences being that the TFOCS approach updates $y$ occasionally, while PDHG updates $y=x_k$ every iteration of the inner algorithm and thus avoids the outer iteration completely. The lack of the outer iteration is an advantage, mainly since it avoids the issue of a stopping criteria. However, the advantage of an inner iteration is that we can apply an accelerated Nesterov method, which can only be done in the PDHG if one has further assumptions on the objective function.

We present a numerical comparison of the two algorithms applied to  RPCA in Figures \ref{fig-PDHG1} and \ref{fig-PDHG2}.

\paragraph{Detailed comparison with double-smoothing approach}
For minimizing smooth but not strongly convex functions $f$, classical gradient descent generates a sequence of iterates $(x_k)$ such that the objective converges at rate $f(x_k) - \min_x f(x) \le \order(1/k)$, and $(x_k)$ itself converges, with $\|\nabla f(x_k)\| \le \order(1/\sqrt{k})$. The landmark work of \citet{Nesterov83} showed that a simple acceleration technique similar to the heavy-ball method generates a sequence $(x_k)$ such that $f(x_k) - \min_x f(x) \le \order(1/k^2)$, although there are no guarantees about convergence of the sequence\footnote{
There is no experimental evidence that the sequence does not converge, and indeed a recent preprint shows that a slight modification of the algorithm can guarantee convergence\citep{FISTA_converges}}   
$(x_k)$ nor strong bounds on $\|\nabla f(x_k)\|$. Note that our dual scheme uses FISTA~\citep{BecTeb09} which is a generalization of Nesterov's scheme; we refer to any such scheme with $\order(1/k^2)$ as ``accelerated''.

 Defining an $\epsilon$-solution to be a point $x$ such that $f(x) - \min_x f(x) \le \epsilon$, we see that it takes $\order(1/\epsilon)$ and $\order(1/\sqrt{\epsilon})$ iterations to reach such a point using the classical and accelerated gradient descent schemes, respectively.

In 2005 Nesterov introduced a smoothing technique~\citep{Nesterov05} which applies to minimizing non-smooth functions over compact sets. A naive approach would use sub-gradient descent, in which the worst-case convergence of the objective is $\order(1/\sqrt{k})$. By smoothing the primal function by a sufficiently small fixed amount, then using the compact constraints, the smooth function differs from the original function by less than $\epsilon/2$. One can then apply Nesterov's accelerated method which generates an $\epsilon/2$-optimal point (to the smoothed problem, and hence $\epsilon$ optimal to the original problem ) in $\order(\Lipschitz/\sqrt{\epsilon})$ iterations. The catch is that $\Lipschitz$ is the Lipschitz constant of the smoothed objective, which is proportional to $\epsilon^{1/2}$, so the overall convergence rate is $\order(1/\epsilon)$, which is still better than the subgradient schemes that would take $\order(1/\epsilon^2)$.

The aforementioned smoothing technique is an alternative to our approach, but the two are not directly comparable, since we do not assume the objective has the same form, nor is our domain necessarily bounded. Furthermore, smoothing the primal objective can have negative consequences. For example, it is common to solve $\ell_1$ regularized problems in order to generate sparse solutions. Our dual-smoothing technique keeps the primal non-smooth and therefore still promotes sparsity, whereas a primal-smoothing technique would replace $\ell_1$ with the Huber function, and this does not promote sparsity; see Fig. 1 in~\citet{TFOCS}.

Another option is the \emph{double-smoothing} technique proposed by~\citet{DoubleSmoothing}. 
This is the approach most similar with our own. Like our approach, a strongly convex term is added to the primal problem in order to make the dual problem smooth, and then the dual problem is solved with an accelerated method. Departing from our approach, they additionally smooth the primal as well (as in \citet{Nesterov05}), which makes the dual problem strongly convex. The reason for this is subtle. Without the strong convexity in the dual (i.e., our approach), we only have a bound on the dual objective function. To translate this into a bound on the primal variable, measured in terms of objective function or distance to the feasible set, requires using the gradient of the dual variable. As mentioned above, accelerated methods have faster rates of convergence in the objective but not of the gradients. For this reason, one must resort to a classical gradient descent method which has slower rates of convergence.

Making the dual problem strongly convex allows the use of special variants of Nesterov's accelerated method (see \citet{Nesterov2004}) which converge at a linear rate, and, importantly, so do the iterates and their gradients. The convergence is in terms of the smooth and perturbed problem, so the size of these perturbations is controlled in such a manner (again, the domains are assumed to be bounded) such that one recovers a $\order( 1/\epsilon \log(1/\epsilon) )$ convergence rate.

The analysis of \citet{DoubleSmoothing} suggests that our method of single smoothing is flawed, but this seems to be an artifact of previous analysis. Using Theorem~\ref{thm:inner}, which is a recent result, we have a rate on the convergence of the primal sequence, rather than on its objective value or distance to optimality. In many situations this is a stronger measure of convergence, depending on the purpose of solving the optimization problem. For robust PCA, the distance to the true solution is indeed a natural metric, whereas sub-optimality of the objective function is rather artificial, and distance to the feasible set depends on the choice of model parameters which maybe somewhat arbitrary.

Furthermore, the lack of bounds on the iterate \emph{sequence} generated by an accelerated method, which was the issue in the analysis of \citet{DoubleSmoothing}, is mainly a theoretical one, since in most practical situations, the variables and their gradients do appear to converge at a fast rate. Our situation is also different since the constraints need not be compact, and we use the proximal point method to reduce the effect of the smoothing.

\subsection{Effect of the smoothing term}
The next section discusses convergence of the proximal point method, but we first discuss the phenomenon that sometimes, the proximal point method converges to the exact solution in a finite number of iterations. This case is not covered by classical exact penalty results~\citep{BertsekasBook},  which only apply when the perturbation is non-smooth (e.g., $\|x-y\|$), whereas we use a smooth perturbation $\|x-y\|^2$.

Whenever the functions and constraints are polyhedral, such as for linear programs, finite convergence (or the ``exact penalty'' property) will occur. This was known since the 1970s; see Prop.\ 8 in~\citet{Rockafellar1976} and \citep{Mangasarian79,Poljak74,Bertsekas75}.
The special case of noiseless basis pursuit was recently analyzed in \citet{YinPenalty} using different techniques.  
More general results, allowing a range of penalty functions, were proved in \citet{FrieTsen:2007}.

For non-polyhedral problems, exact penalty does not occur in general. For example, one can construct an example of nuclear norm minimization which does not have the exact penalty property. However, under additional assumptions that are typical to guarantee exact recovery in the sense of \citet{candes2010matrix}, it is possible to obtain exact penalty results. Research in this is motivated by the popularity of the \citep{SVT} algorithm, which is a special case of the TFOCS framework applied to matrix completion. Results are in \citet{YinNuclearPenalty}, as well as \citet{ExactPenaltyRPCA} (and the correction \citet{ExactPenaltyRPCA_fix}) which also provides results for the RPCA problem in particular. Some results on generalizations to tensors are also available~\citet{ExactPenaltyTensor}.

\subsection{Convergence}
\label{sec:convergence}

\paragraph{Certificates of accuracy of the sub-problem}
For solving the smoothed sub-problem $\min_x \,\FF_k(x)$, we assume the proximity operator of each $\proxFcn_i$ is easy to compute, and given this, it is reasonable to expect that it is easy to compute a point in $\partial \proxFcn_i$ as well. Furthermore, since the algorithm  computes the effect of the linear operators  on the current iterate, 
it may be possible to reuse previously computed values. Thus, computing a point in $\partial \FF_k$ may be relatively cheap since it is just the sum of the $\partial \proxFcn_i$ composed with the appropriate linear operators.  We can now obtain accuracy guarantees via the following proposition.

\begin{proposition}\cite[Prop. 3]{Rockafellar1976}\label{prop:guarantee}
    Let $y = \argmin_x \, \FF_k(x)$, then for all points $x$ and all $d\in \partial F_k(x)$,
\begin{equation}
\|x-y\| \le \mu_k^{-1}\|d\|.
\end{equation}
 \end{proposition}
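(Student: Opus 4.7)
The plan is to exploit strong convexity of $F_k$. Recall that $F_k(x) = f(x) + \frac{\mu_k}{2}\|x - y_k\|^2$ with $f$ convex; the added quadratic term makes $F_k$ at least $\mu_k$-strongly convex. Strong convexity gives a quadratic lower bound on $F_k$ at any subgradient, and the trick is to pair two such bounds (one at the arbitrary $x$, one at the minimizer $y$) and add them.

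Specifically, I would first write down the strong subgradient inequality at $x$: for any $d \in \partial F_k(x)$,
\[
F_k(y) \;\ge\; F_k(x) + \iprod{d}{y-x} + \frac{\mu_k}{2}\|y-x\|^2.
\]
Next, by Fermat's rule $0 \in \partial F_k(y)$, so the same strong subgradient inequality applied at $y$ with the subgradient $0$ yields
\[
F_k(x) \;\ge\; F_k(y) + \frac{\mu_k}{2}\|x-y\|^2.
\]
Adding these two inequalities, the $F_k(x)$ and $F_k(y)$ terms cancel, leaving
\[
0 \;\ge\; \mu_k \|x-y\|^2 + \iprod{d}{y-x},
\]
or equivalently $\iprod{d}{x-y} \ge \mu_k \|x-y\|^2$.

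The last step is Cauchy--Schwarz: $\|d\|\,\|x-y\| \ge \iprod{d}{x-y} \ge \mu_k \|x-y\|^2$. If $x = y$ the bound is trivial; otherwise divide by $\|x-y\|$ to obtain $\|x - y\| \le \mu_k^{-1}\|d\|$. There is no real obstacle here—the whole argument is just the standard ``strong convexity controls distance to the minimizer by any subgradient norm'' lemma, with the only mild subtlety being that $\partial F_k(x) = \partial f(x) + \mu_k(x - y_k)$ may be multi-valued, so the bound must hold for \emph{every} $d \in \partial F_k(x)$, which the above derivation handles automatically.
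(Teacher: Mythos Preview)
Your proof is correct. The paper does not actually supply its own proof of this proposition; it simply cites \cite[Prop.~3]{Rockafellar1976} and moves on. Your argument via strong convexity of $F_k$ is the standard one and is essentially equivalent to the strong-monotonicity argument in Rockafellar's original: adding the two strong-convexity subgradient inequalities is exactly how one shows that $\partial F_k$ is $\mu_k$-strongly monotone, i.e., $\iprod{d-0}{x-y}\ge \mu_k\|x-y\|^2$ for $d\in\partial F_k(x)$ and $0\in\partial F_k(y)$, from which Cauchy--Schwarz finishes the bound.
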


\paragraph{Convergence of the proximal point method}
The convergence of the proximal point method is well-understood, but we are particularly interested in the case when the update step is computed inaccurately. There has been recent work on this (see e.g. \citep{SchmidtInexact,VillaInexact,Devolder2011})
but often under the assumption that the computed point is feasible, i.e., 
it is inside $\dom f$. Using the dual method, this cannot be guaranteed in general (though it certainly applies to many special cases).
One can apply the analysis of gradient descent from \citep{Devolder2011} to the proximal point algorithm (viewed as gradient descent on the Moreau-Yosida envelope), and compute an inexact gradient in the sense that the primal point is the exact gradient of a perturbed point. This perturbed point is based on the sub-optimality of the dual variable (see \eqref{eq:gradient}), which, per the discussion of \citet{DoubleSmoothing} above, does not have a bounded converge rate when using an accelerated algorithm, 
and hence we do not pursue this line of analysis.

We start with an extremely broad theorem that guarantees convergence under minimal assumptions, albeit without an explicit convergence rate:
\begin{theorem}\cite[Thm. 1]{Rockafellar1976} \label{P1C3thm:1}
The approximate proximal point method defined by
\begin{align*}
\tilde{y}_{k+1} &= \argmin\, \FF_k(x) \\
y_{k+1} &\;\text{any point satisfying}\; \|y_{k+1} - \tilde{y}_{k+1}\| \le \epsilon_k \\
\end{align*}
with $\sum_{k=1}^\infty \epsilon_k < \infty$, $y_0$ arbitrary, $\FF_k$ as defined in \eqref{eq:proximalPoint}, and $\limsup_{k\rightarrow \infty} \mu_k <\infty$, 
will generate a sequence $\{y_k\}$ that converges to a minimizer of $f$.
\end{theorem}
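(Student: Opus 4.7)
The plan is to recognize this as the standard quasi-Fejér argument for approximate resolvent iterations of a maximal monotone operator, specialized to $T = \partial f$. Let $P_k = \prox_{f/\mu_k} = (I + \mu_k^{-1}\partial f)^{-1}$, so that $\tilde y_{k+1} = P_k(y_k)$. The minimizers of $f$ are exactly the zeros of $\partial f$, which are exactly the fixed points of every $P_k$; denote this (nonempty, closed, convex) set by $X^\star$.

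First I would establish quasi-Fejér monotonicity with respect to $X^\star$. Firm nonexpansiveness of the resolvent gives, for any $x^\star \in X^\star$,
\[
\|\tilde y_{k+1} - x^\star\|^2 \;\le\; \|y_k - x^\star\|^2 - \|\tilde y_{k+1} - y_k\|^2,
\]
so $\|\tilde y_{k+1}-x^\star\| \le \|y_k-x^\star\|$. The triangle inequality and the hypothesis $\|y_{k+1}-\tilde y_{k+1}\|\le \epsilon_k$ yield
\[
\|y_{k+1}-x^\star\| \;\le\; \|y_k-x^\star\| + \epsilon_k.
\]
Since $\sum_k \epsilon_k < \infty$, a standard lemma on quasi-Fejér sequences (essentially Lemma 2 in \citet{Rockafellar1976}) implies that $\lim_k\|y_k-x^\star\|$ exists for every $x^\star\in X^\star$, and in particular $(y_k)$ is bounded.

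Next I would squeeze out that $\tilde y_{k+1}-y_k \to 0$. Expanding $\|y_{k+1}-x^\star\|^2 \le (\|\tilde y_{k+1}-x^\star\|+\epsilon_k)^2$ and plugging into the firm nonexpansiveness bound gives, after a telescoping sum,
\[
\sum_{k=0}^\infty \|\tilde y_{k+1}-y_k\|^2 \;\le\; \|y_0-x^\star\|^2 + C\sum_k \epsilon_k + \sum_k\epsilon_k^2 \;<\;\infty,
\]
using boundedness of $(y_k)$ to control the cross term. Hence $\|\tilde y_{k+1}-y_k\|\to 0$, and so $\|y_{k+1}-y_k\|\to 0$ as well.

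The final and most delicate step is to identify a cluster point of $(y_k)$ as a minimizer. Since $(y_k)$ lies in a bounded subset of $\R^n$, pass to a subsequence $y_{k_j}\to \bar y$. Because $\limsup \mu_k <\infty$ (and, to be safe, passing to a further subsequence so that $\mu_{k_j}\to \bar\mu \in (0,\infty)$, which is also implicitly needed—this is the one technical snag), the closedness of the graph of $\partial f$ plus $\mu_{k_j}(y_{k_j}-\tilde y_{k_j+1}) \in \partial f(\tilde y_{k_j+1})$ and $\tilde y_{k_j+1}\to \bar y$ imply $0 \in \partial f(\bar y)$, i.e., $\bar y \in X^\star$. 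The hard part here is precisely verifying this graph-closedness limit, which is where the assumptions on $\mu_k$ (bounded above, and bounded away from zero as Rockafellar requires) really get used. Combining this with the first step, $\lim_k\|y_k-\bar y\|$ exists and has a subsequence tending to $0$, so the whole sequence converges to $\bar y$, completing the proof.
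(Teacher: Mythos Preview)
The paper does not supply its own proof of this theorem; it is quoted directly from \citet{Rockafellar1976} and used as a black box. So there is nothing in the paper to compare your argument against. That said, your outline is exactly the classical quasi-Fej\'er argument from Rockafellar's paper and is correct in substance.

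One clarification on your ``technical snag'': you do \emph{not} need $\mu_{k_j}\to\bar\mu>0$. The optimality condition reads $w_k:=\mu_k(y_k-\tilde y_{k+1})\in\partial f(\tilde y_{k+1})$, and to pass to the limit via the closed graph of $\partial f$ you only need $w_{k_j}\to 0$ and $\tilde y_{k_j+1}\to\bar y$. The first follows because $\|y_k-\tilde y_{k+1}\|\to 0$ and $\mu_k$ is bounded \emph{above} (this is precisely what $\limsup_k\mu_k<\infty$ gives you); no lower bound on $\mu_k$ is required. Your parenthetical remark that ``Rockafellar requires $\mu_k$ bounded away from zero'' confuses the parameterization: in Rockafellar's notation the resolvent is $(I+c_kT)^{-1}$ with $c_k=\mu_k^{-1}$, and his condition $\inf_k c_k>0$ is exactly $\sup_k\mu_k<\infty$. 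So the hypothesis in the statement is already the right one, and your argument goes through cleanly without the extra subsequence extraction.
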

We note that the boundedness of iterates follows by our assumption that a minimizer exists; in infinite dimensions, convergence is in the weak topology. 
To guarantee $\|y_{k+1} - \tilde{y}_{k+1}\| \le \epsilon_k $, we can either bound this \emph{a priori} using  Thm.~\ref{thm:inner}, or we can bound it \emph{a posteriori} by explicitly checking using Prop.~\ref{prop:guarantee}.

We state a second theorem that guarantees local linear convergence under standard assumptions. This assumption is that there is a unique solution to $\min\, \ff(x)$ and that $f$ has sufficient curvature nearby; it is related to the standard second-order sufficiency condition, but slightly weaker. See \citet{Rockafellar1976} for an early use, and \citet{BurkeQian97} for a more recent discussion.
\begin{assumption}\label{assumption1}
    There is a unique solution $x^\star$ to $\min\, f(x)$, i.e., $\partial \ff^{-1}(0) = x^\star$; and 
$\partial \ff^{-1}$ is locally Lipschitz continuous at $0$ with constant $a$, i.e., there is some $r$ such that $\|w\|\le r$ implies $\|x-x^\star\|\le a\|w\|$ whenever $x \in \partial \ff^{-1}(w)$.
\end{assumption}
Recall that via basic convex analysis, $0 \in \partial \ff(x^\star)$.
Finding $\argmin F_k(x)$ is the same as computing the proximity operator $P_k \defeq (I+ \mu_k^{-1} \partial \ff )^{-1}$. Define $Q_k = I - P_k$, then we have that $P_k$ (and $Q_k$) are firmly non-expansive~\citep{Moreau1965,Bauschke2011}, meaning
\[
\forall x,x',\; \|P_k(x)-P_k(x')\|^2+\|Q_k(x)-Q_k(x')\|^2 \le \|x-x'\|^2.
\]
Furthermore, $x^\star = P_k(x^\star)$ (this is independent of $\mu_k$), and $Q_k(x^\star) = 0$.  Now, we state a novel theorem, where for simplicity we have assumed $\mu_k \equiv \mu \neq 0$:

\begin{theorem} \label{P1C3thm:2}
    Under Assumption~\ref{assumption1}, 
    the approximate proximal point method defined by
    \begin{align*}
    \tilde{y}_{k+1} &= \argmin\, \FF_k(x) \\
    y_{k+1} &\;\text{any point satisfying}\; \|y_{k+1} - \tilde{y}_{k+1}\| \le (\gamma/2)^k \\
    \end{align*}
    with 
    \[
    \gamma = \frac{a}{\sqrt{a^2 + \mu^{-2}}} < 1
    \]
    generates a sequence $(y_k)$ that 
    converges linearly to $x^\star$ for all $k$ sufficiently large, and with rate $\gamma$.
\end{theorem}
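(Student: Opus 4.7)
The plan is to adapt Rockafellar's local linear convergence argument to the specified error sequence $\epsilon_k = (\gamma/2)^k$. The starting point is the identification $\tilde{y}_{k+1} = P_k(y_k)$, where $P_k = (I+\mu^{-1}\partial f)^{-1}$. By the definition of the resolvent, setting $w_k := \mu(y_k - \tilde{y}_{k+1})$ yields $w_k \in \partial f(\tilde{y}_{k+1})$, equivalently $\tilde{y}_{k+1} \in \partial f^{-1}(w_k)$. Once $\|w_k\| \le r$, Assumption~\ref{assumption1} delivers $\|\tilde{y}_{k+1} - x^\star\| \le a\|w_k\| = a\mu\,\|y_k - \tilde{y}_{k+1}\|$.

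Next I would use firm nonexpansiveness of $P_k$ and $Q_k = I - P_k$, combined with $P_k(x^\star) = x^\star$ and $Q_k(x^\star)=0$, to get
\[
\|\tilde{y}_{k+1} - x^\star\|^2 + \|y_k - \tilde{y}_{k+1}\|^2 \le \|y_k - x^\star\|^2.
\]
Substituting the lower bound $\|y_k - \tilde{y}_{k+1}\| \ge \|\tilde{y}_{k+1} - x^\star\|/(a\mu)$ gives $\|\tilde{y}_{k+1} - x^\star\|^2(1 + (a\mu)^{-2}) \le \|y_k - x^\star\|^2$, i.e., $\|\tilde{y}_{k+1} - x^\star\| \le \gamma\,\|y_k - x^\star\|$ with $\gamma = a/\sqrt{a^2 + \mu^{-2}} < 1$, the stated contraction factor for the exact proximal step.

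To handle the inexactness I apply the triangle inequality $\|y_{k+1} - x^\star\| \le \gamma\,\|y_k - x^\star\| + \epsilon_k$ with $\epsilon_k = (\gamma/2)^k$ and iterate. A routine computation bounds the inhomogeneous sum:
\[
\|y_k - x^\star\| \le \gamma^k \|y_0 - x^\star\| + \sum_{j=0}^{k-1} \gamma^{k-1-j}(\gamma/2)^j \le \gamma^k\bigl(\|y_0 - x^\star\| + 2/\gamma\bigr),
\]
which is linear convergence at rate $\gamma$. Because $\sum \epsilon_k < \infty$, Theorem~\ref{P1C3thm:1} already guarantees $y_k \to x^\star$, so $\tilde{y}_{k+1} \to x^\star$ as well, and hence $\|w_k\| = \mu\|y_k - \tilde{y}_{k+1}\| \to 0$; in particular $\|w_k\| \le r$ for all $k \ge k_0$. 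The contraction argument therefore applies from $k_0$ onward, justifying the ``for all $k$ sufficiently large'' qualifier.

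The main obstacle is the bootstrapping between the local nature of the Lipschitz assumption and the need for a contraction estimate that only holds inside $\{\|w\|\le r\}$. Handling this requires invoking the qualitative convergence result of Theorem~\ref{P1C3thm:1} to guarantee eventual entry into the Lipschitz region, after which the quantitative bounds apply. A secondary subtlety is that the factor $(\gamma/2)^k$ must be small enough not to corrupt the asymptotic rate $\gamma$; the factor of $1/2$ is precisely what makes $\sum \gamma^{-1-j}(\gamma/2)^j$ a convergent geometric series whose partial sums are absorbed into a constant multiple of $\gamma^k$.
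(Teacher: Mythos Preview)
Your proposal is correct and follows essentially the same route as the paper's proof: both invoke Theorem~\ref{P1C3thm:1} to force $\|w_k\|=\mu\|y_k-\tilde y_{k+1}\|\to 0$ and hence eventual entry into the Lipschitz region, then combine Assumption~\ref{assumption1} with firm nonexpansiveness of $P_k$ to obtain the exact-step contraction $\|\tilde y_{k+1}-x^\star\|\le\gamma\|y_k-x^\star\|$, and finally absorb the error $(\gamma/2)^k$ via the triangle inequality and a geometric-sum bound. The only cosmetic difference is that the paper iterates the recurrence explicitly from the index $k_0$ at which the local Lipschitz bound first holds, whereas you display the recurrence from $k=0$ and then note the $k_0$ correction afterwards; either way the conclusion and the rate are the same.
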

\begin{proof}
    Note that we have defined $\tilde{y}_{k+1} = P_k(y_k)$. 
Observe that the assumption of Theorem~\ref{P1C3thm:1} holds since the errors are clearly summable, hence $(y_k)$ converges, and $\|y_{k+1} - y_k \| \rightarrow 0$, so this is arbitrarily small for $k$ sufficiently large. We also have
\begin{align*}
\|Q_k(y_k)\| = \|y_k - P_k(y_k)\| &\le \|y_k - y_{k+1}\| + \|y_{k+1} - P_k(y_k)\| 
\end{align*}
and both the terms on the right side go to zero.
By basic convex analysis (e.g., Prop.\ 1a in \citet{Rockafellar1976}),
\[
P_k(y_k) \in \partial \ff^{-1}( \mu Q_k(y_k) )
\]
and so for $k$ large enough (say, $k\ge k_0$), we are in the Lipschitz region of the assumption, so
\begin{equation}\label{eq:localLip}
\|P_k(y_k) - x^\star\| \le a \| \mu Q_k(y_k) \|.
\end{equation}
Now using the firmly non-expansiveness and properties of $x^\star$ mentioned above,
\begin{equation}\label{eq:firm}
\|x^\star -P_k(y_k)\|^2 + \|Q_k(y_k)\|^2
 \le \|x^\star - y_k\|^2
\end{equation}
so combining \eqref{eq:localLip} with \eqref{eq:firm} gives
\[
\|x^\star -P_k(y_k)\| \le \gamma \|x^\star - y_k\|
\]
which in effect proves the eventual linear convergence in the exact case where $y_{k+1} = P_k(y_k)$ 
(up to this point, the proof follows Thm.\ 2 from \citet{Rockafellar1976}).

Now bound
\begin{align*}
\|y_{k+1} - x^\star \| &\le \|y_{k+1} - P_k(y_k)\| + \|P_k(y_k) - x^\star\| \\
&\le (\gamma/2)^k + \gamma \|y_k - x^\star \| \\
&\le (\gamma/2)^k + \gamma \left( (\gamma/2)^{k-1} + \gamma \|y_{k-1} - x^\star \| \right) \\
& \vdots \\
&\le \gamma^k \sum_{i=k_0}^k 2^{-i} +  \gamma^{k+1-k_0}\|y_{k_0} - x^\star\| \\
&\le \gamma^k \left(1+\gamma^{1-k_0}\|y_{k_0} - x^\star\| \right)
\end{align*}
which proves our result.
\end{proof}

Again, we can certify that $y_{k+1}\simeq P_k(y_k)$ either use Thm.~\ref{thm:inner}, or we can bound it \emph{a posteriori} by explicitly checking using Prop.~\ref{prop:guarantee}. Since the linear convergence only occurs locally, it is not possible to provide an overall iteration-complexity of the inner and outer iterations (it is possible with further assumptions on $f$, such as $f$ having full domain; see \citet{BeckerThesis}). Without some form of strong convexity near the solution, it is generally not possible to bound the rate on the iterates, but rather only bound the rate of the objective function, and this is not possible with the dual approach since the point may not be feasible.

If we assume that the linear converges occurs globally, then we can combine this with our complexity bound on the sub-problem from \eqref{eq:rate1}. Converting that rate to our new notation, and using $j$ to index the inner loop, and setting the initial dual variable $\bigX_0$ to the one corresponding to $y_k$, we have
\[
\| x_{j} - P_k(y_k) \| \le 2\frac{ \|\bigLinear\| d_k }{\mu \cdot j }
\]
where $d_k$ is the distance from $\bigX_0$ (corresponding to $y_k = \nabla \bigProxFcn^*( \bigLinear^* \bigX_0)$) to the optimal set of dual solutions. Bounding this explicitly can be done in some cases (see \citet{NIPS_Bruer2014}), but for the sake of analysis we will simply assume $d_k$ is upper-bounded by some $d$.

\subsection{Solving the dual problem efficiently in the case of RPCA}
For solving \eqref{eq:classicRPCA}, we set $\proxFcn_0(L,S) = \matrixNorm{L}_* + \lambda \|S\|_1$
and $\proxFcn_1(\linear(L,S)-\obs) = \chi_{\{0\}}( L+S-\obs)$ to enforce the constraint exactly.
In this case, $\proxFcn_1^*$ is the constant function, and so $\prox_{\proxFcn_1^*}$ is the identity --- that is to say, the dual problem is unconstrained.

In that case, instead of using FISTA to solve the dual, one may use techniques from \emph{unconstrained} optimization, such as non-linear conjugate gradient and L-BFGS~\citep{NocedalWright}. These algorithms work extremely well in practice. We do not go into further detail since we find the exact constraint formulation of RPCA to be artificial. With inequality constraints, the dual problem becomes non-smooth so it is necessary to use a proximal gradient method. Due to the cost of the objective function, it may be worthwhile to use quasi-Newton projected gradient methods such as that of \citet{PQN}.

\section{Numerical Experiments} \label{section-Experiments}

\subsection{Numerical results for TFOCS}
To highlight the flexibility of TFOCS, we consider a background subtraction problem of a surveillance video in which we do not wish to enforce $\obs = L+S$ but instead we wish to separate $\obs$ into components up to the quantization level. The video $\obs$ is quantized to integer values between $0$ and $255$, so we can think of this as being the quantized version of some real-valued video $\widehat{\obs}$, and thus $\| \widehat{\obs}-\obs\|_\infty \le 0.5$ since the quantized version is rounded to the nearest integer. Hence we solve the following:
\begin{equation}\label{linf}
\min_{L,S}\; \phi_0(L,S) = \matrixNorm{L}_* + \lambda\|S\|_1 \quad\text{s.t.}\quad \|\obs-L-S\|_\infty \le 0.5.
\end{equation}

In the TFOCS software (written in MATLAB), we work with the primal variable \texttt{X=\{L,S\}}, and one specifies the fucntion $\proxFcn_0$ in the term \texttt{obj} like
\begin{verbatim}
obj    = { prox_nuclear(1), prox_l1(lambda) };    
\end{verbatim}
Next we encode $\proxFcn_1$ and $\linear_1$ and $\offset_1$.
The linear term, applied to \texttt{X=\{L,S\}}, is $\linear_1=[I,I]$, and the offset is $\offset_1 = -\obs$. This is encoded as
\begin{verbatim}
affine = { 1, 1, -A };
\end{verbatim}
and $\proxFcn_1$ is represented implicitly by giving its conjugate. For standard equality constrained RPCA, the constraint is $\obs -L-S=0$, 
so $\proxFcn_1$ is the indicator function of the set $\{0\}$, and the conjugate of this is the function that is constant everywhere, so its proximity operator is the identity (i.e., projection on $\R^n$). This is written
\begin{verbatim}
dualProx = proj_Rn
\end{verbatim}

If instead we wish to solve \eqref{linf}, then $\proxFcn_1$ is the indicator function of the $\ell_\infty$ ball of radius $1/2$, and so its conjugate is $1/2 \| \cdot\|_1$, which is written 
\begin{verbatim}
dualProx = prox_l1(0.5);
\end{verbatim}
The code can then be called as follows:
\begin{verbatim}
X = tfocs_SCD( obj, affine, dualProx, mu, X0 );
\end{verbatim}

\begin{figure}[ht]
\centering
\subfigure[$\obs$]{\includegraphics[width=.25\textwidth]{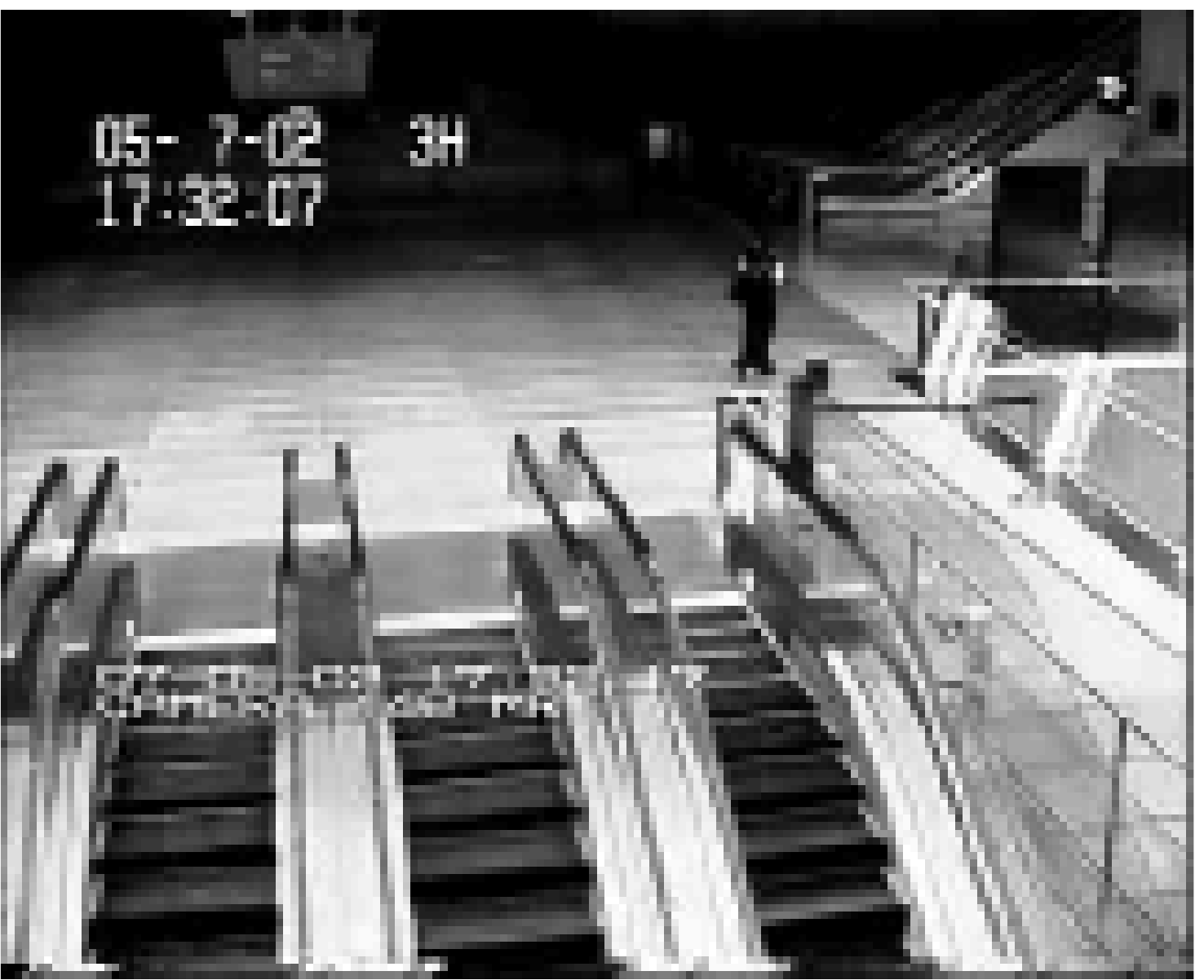}}
\subfigure[$L$]{\includegraphics[width=.25\textwidth]{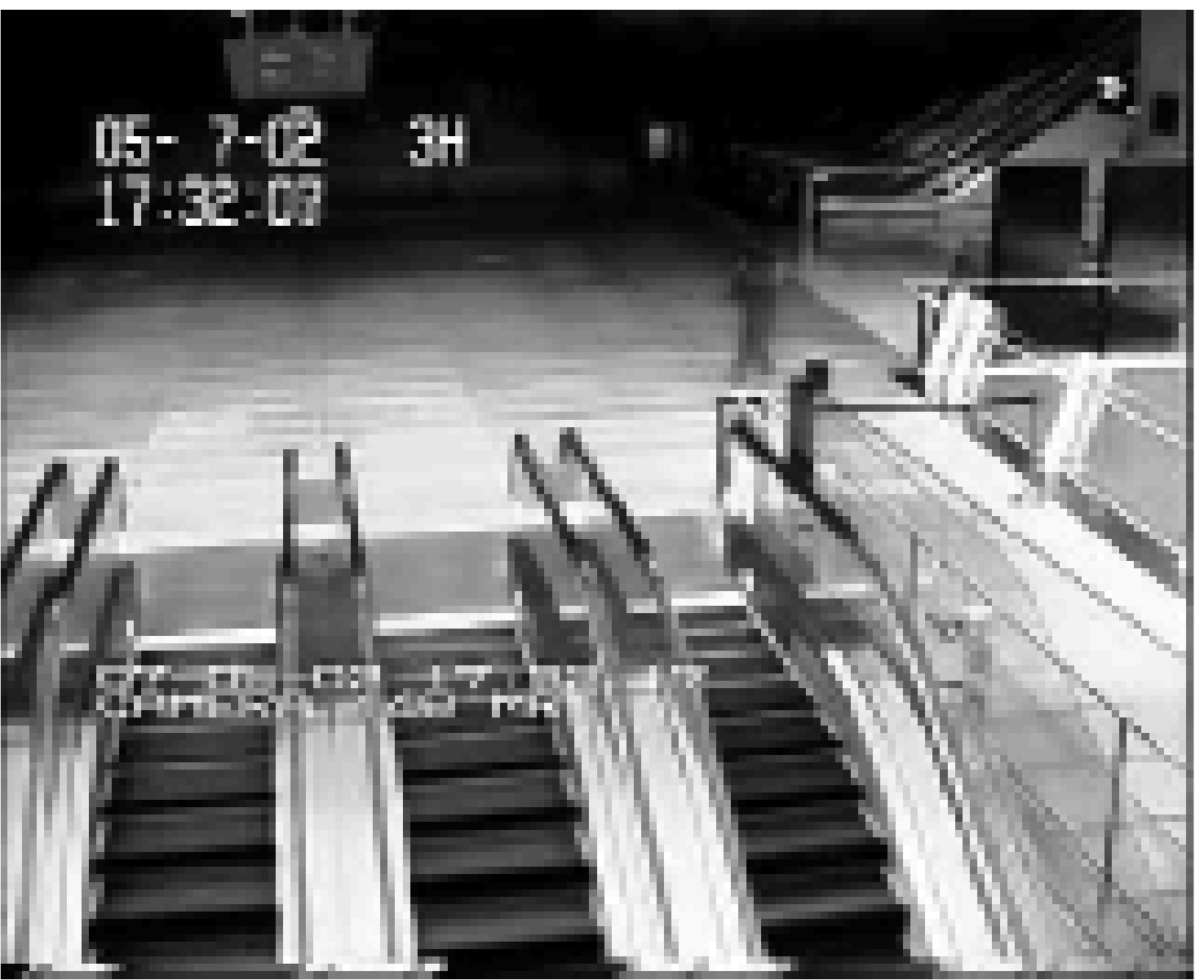}}
\subfigure[$S$]{\includegraphics[width=.25\textwidth]{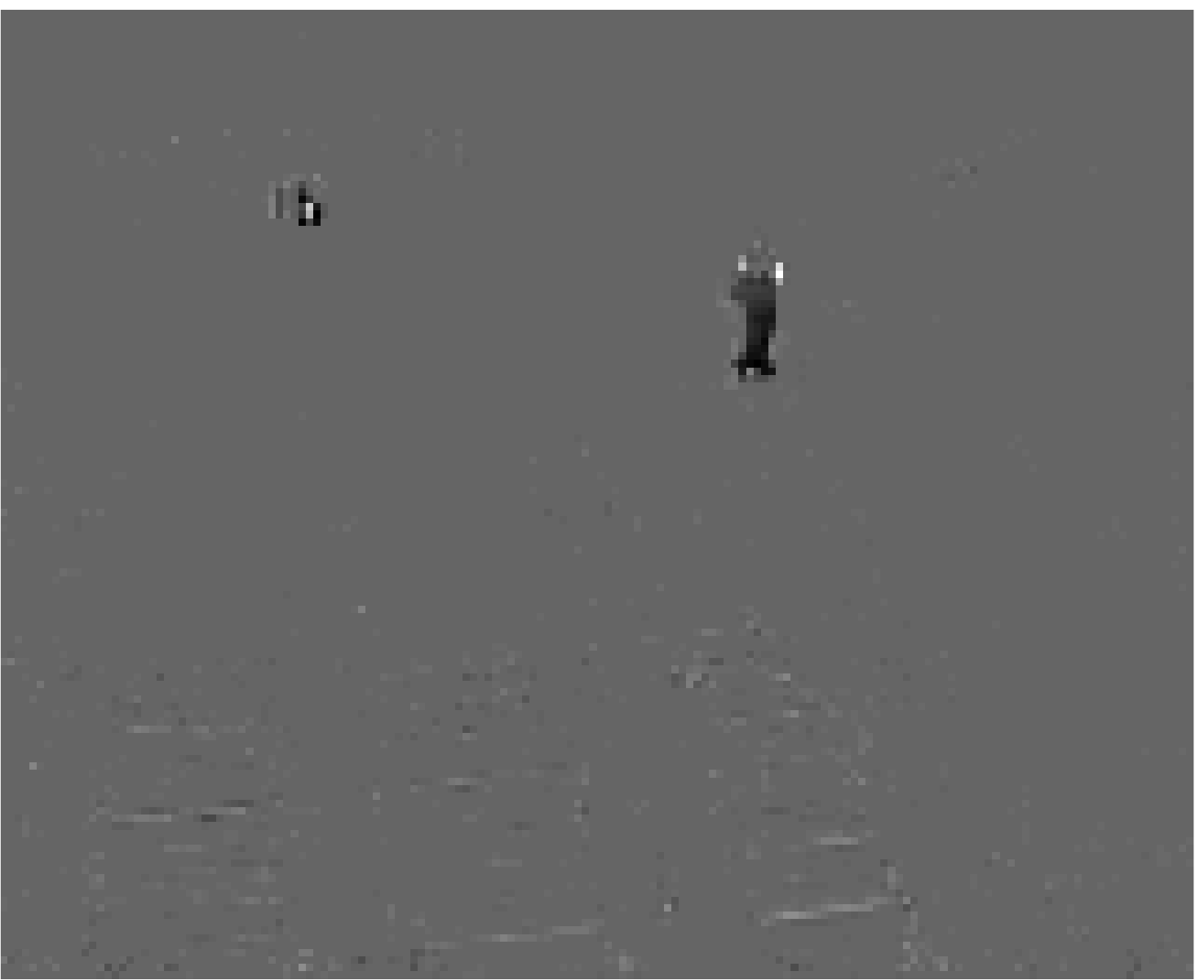}}
\caption{Frame $110$ from the movie, showing original $\obs$, low-rank $L$ and sparse $S$ components}
\label{escalator}
\end{figure}

More detailed code is available as a demo at \footnote{\url{http://cvxr.com/tfocs/demos/rpca/}}; the video data is from \footnote{\url{http://perception.i2r.a-star.edu.sg/bk_model/bk_index.html}}.
Results are shown in Fig.~\ref{escalator}. 
Although the walking person is correctly identified in the $S$ component, 
a small amount of the person appears in $L$. However, it is remarkable that 
the low-rank component mostly captures the moving escalator, 
which is a feat that most background subtraction cannot do without 
a specially targeted algorithm.

\paragraph{Comparison with PDHG}
As discussed in \S\ref{sec-literature}, the primal-dual hybrid gradient (PDHG) method is similar to the TFOCS algorithm. In TFOCS, one controls the value of $\mu$ and then runs the proximal point algorithm, and the sub-problem is solved by FISTA (or variants) that use linesearch techniques and therefore do not need a stepsize.
In PDHG, there is no line search, but there are two stepsizes $\tau$ and $\sigma$ which are linked in the fashion $\tau \sigma < 1/\Lipschitz^2$. Larger stepsizes generally lead to better performance, so by insisting $\tau \sigma = 0.99/\Lipschitz^2$, there is only one effective parameter choice.

\begin{figure}[ht]
\centering
\subfigure[TFOCS]{\includegraphics[width=.4\textwidth]{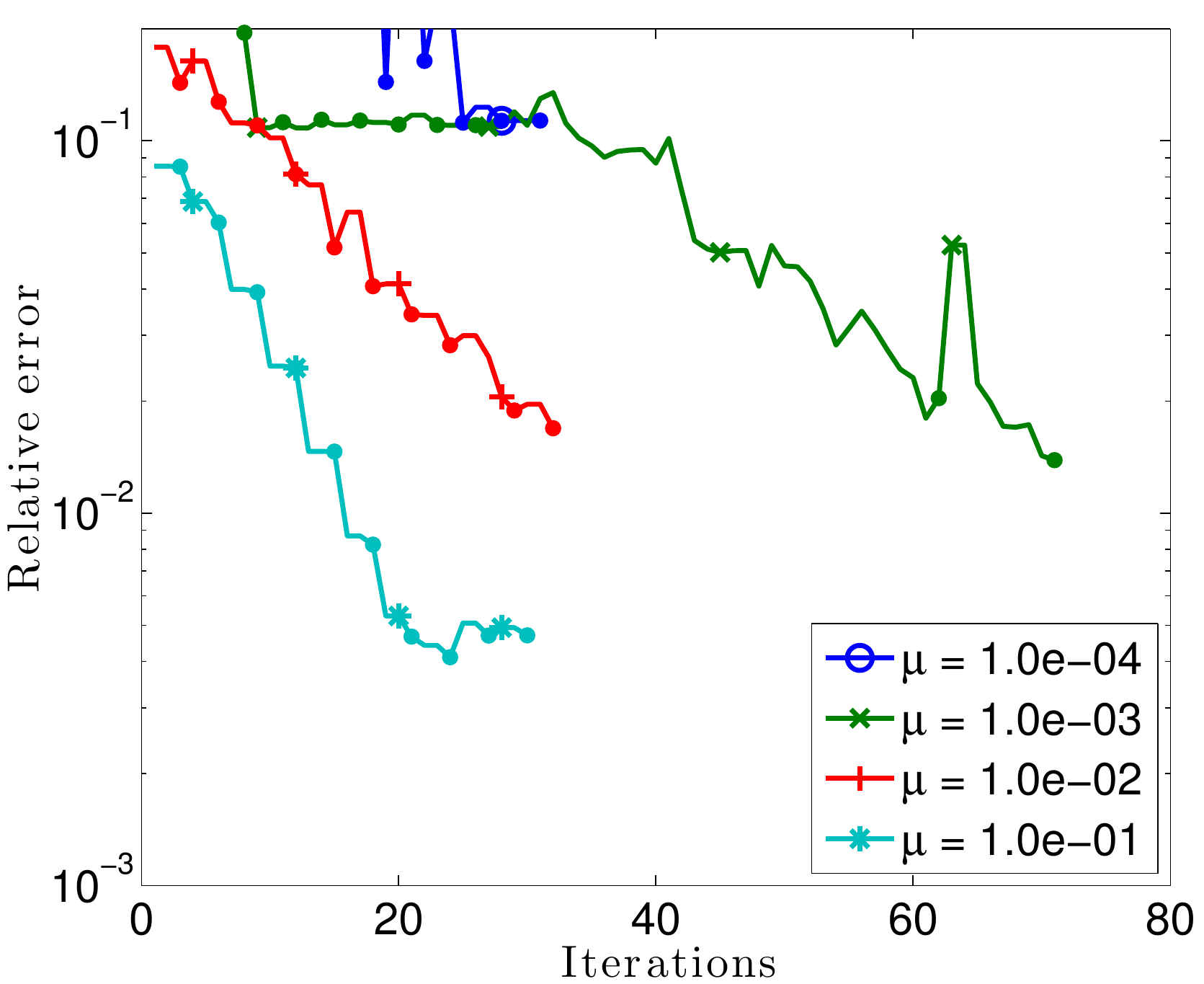}}
\subfigure[PDHG]{\includegraphics[width=.4\textwidth]{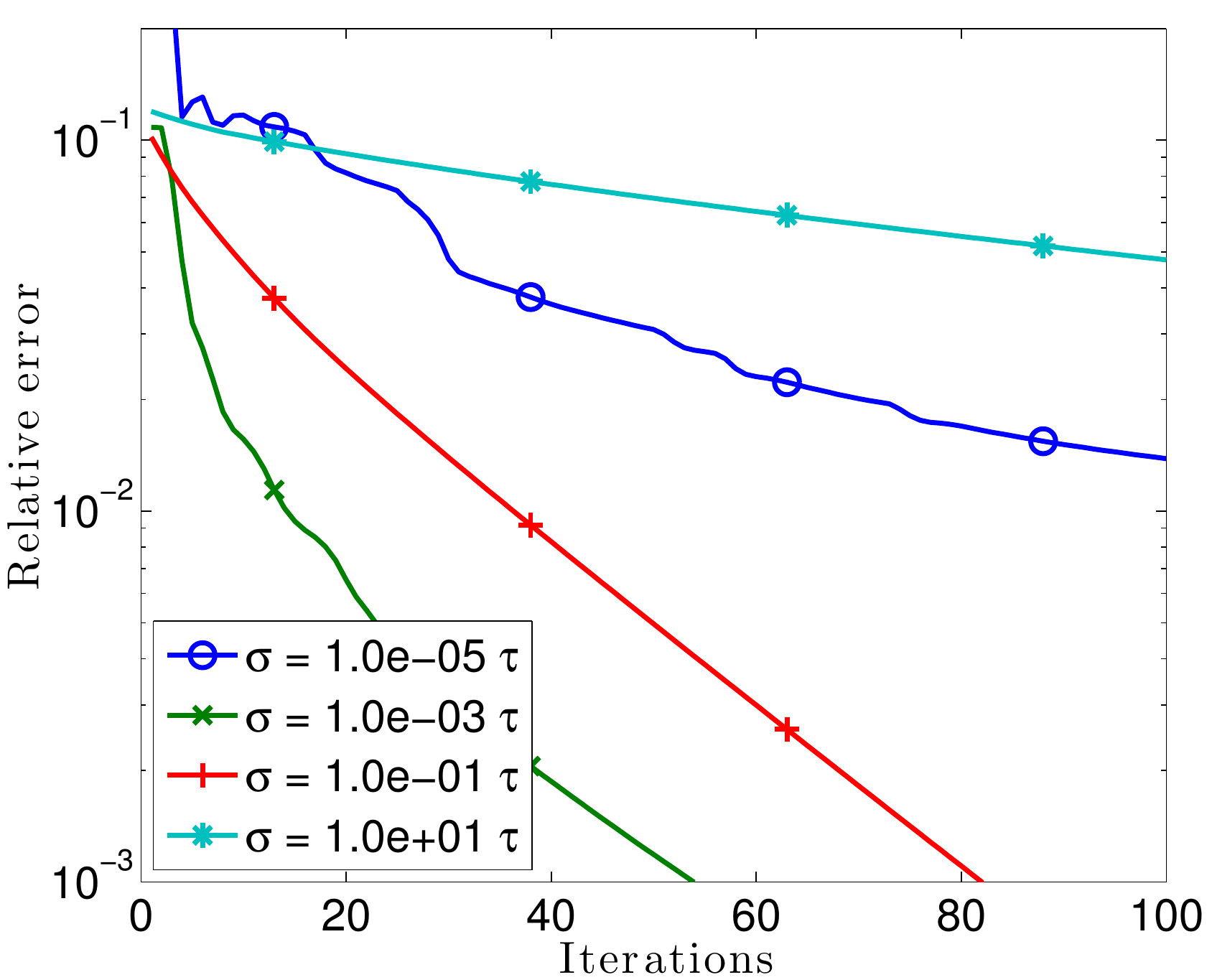}}
\caption{Convergence plots on the elevator data, for various parameter values. The small dots in the TFOCS plot show where the proximal point algorithm took another outer step.}
\label{fig-PDHG1}
\end{figure}

\begin{figure}[ht]
\centering
\subfigure[TFOCS]{\includegraphics[width=.25\textwidth]{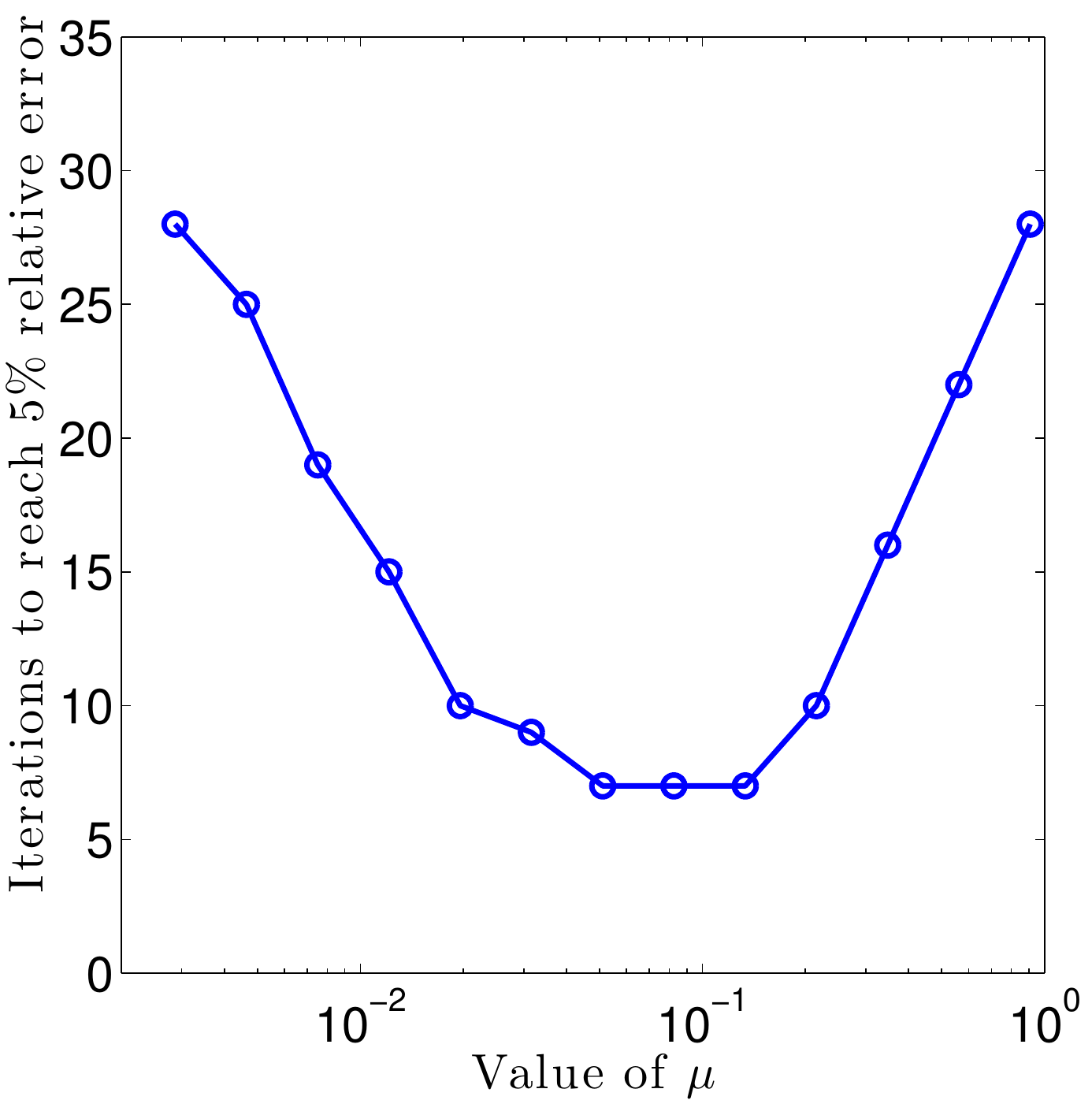}}
\subfigure[PDHG]{\includegraphics[width=.25\textwidth]{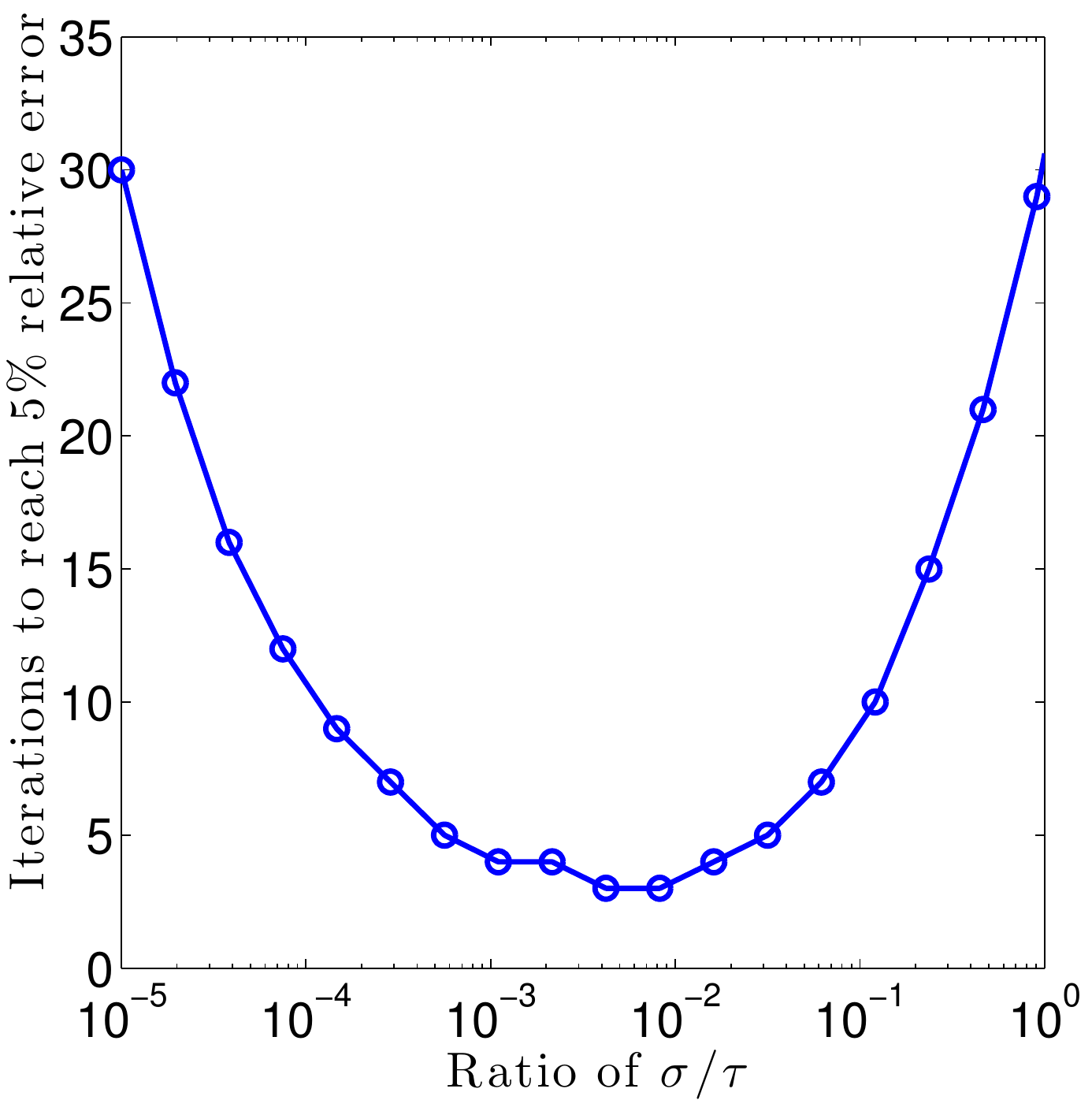}}
\caption{Number of iterations to reach a fixed tolerance, as function of parameter value}
\label{fig-PDHG2}
\end{figure}

Using a version of the same escalator film discussed in the previous section, we compare PDHG and TFOCS on \eqref{eq:sum-SPCP}. In TFOCS, we encode this with
\begin{verbatim}
dualProx = prox_l2(epsilon);
\end{verbatim}
The parameters $\epsilon$ and $\lambda$ were chosen by running the quasi-Newton solver on \eqref{eq:lag-SPCP} and tuning by hand until results looked acceptable. Running the special purpose solver gives a very accurate reference answer. From the solution to \eqref{eq:lag-SPCP}, one can infer $\epsilon$ and $\lambdaSum$.

Neither method is specific for RPCA problems, so we do not expect cutting-edge performance, but we do see reliable performance, and the ability to adapt to variations in the model. We focus on parameter selection. Both methods perform roughly equally, and both are strongly dependent on the parameter choice. A major weakness of all current methods is lack of guidance for choosing parameters in practice; the effort of \citet{PockPrecond} to find good values resulted in mixed success. The software TFOCS automatically rescales variables in order to make all $\linear_i$ terms have the same spectral norm, which has a small beneficial effect.

Fig.~\ref{fig-PDHG1} shows the decay of the relative error $\sqrt{ \|L-L_0\|_F^2 + \|S-S_0\|_F^2 }/\sqrt{ \|L_0\|_F^2 + \|S_0\|_F^2 }$ where $(L_0,S_0)$ is the accurate reference solution computed via the quasi-Newton algorithm. TFOCS has the advantage that the sub-problems can use a fast solver with a good linesearch, but the disadvantage that with the two levels of iterations, the inner iteration must be terminated at the right time. If it is stopped too early, the solution is not accurate enough, while if it is stopped too late, the algorithm wastes time on finding a very precise answer to a useless intermediate problem. 

Fig.~\ref{fig-PDHG2} shows the number of iterations required to reach a fixed tolerance. Confirming the behavior in the previous figure, it is clear that convergence can be rapid for good parameter choices, and slow for poor parameter choices.

\subsection{Numerical results for quasi-Newton algorithm}
\label{sec:numerics}
We compare new algorithms and formulations to PSPG~\citep{AybatRPCA}, NSA~\citep{Aybat2013}, and 
ASALM~\citep{ASALM}\footnote{PSPG, NSA and ASALM available from the experiment package at \url{http://www2.ie.psu.edu/aybat/codes.html}}. 
We modified the other software as needed for testing purposes. PSPG, NSA and ASALM all solve \eqref{eq:sum-SPCP}, 
but ASALM has another variant which solves \eqref{eq:lag-SPCP} so we test this as well. 
All three programs also use versions of PROPACK from \citet{BeckerPROPACK} to compute partial SVDs.
We measure error as a function of time, since cost of a single iteration can vary among the solvers. 
To fairly compare all the algorithms in the simulated experiments, we measure the (relative) error of a trial solution $(L,S)$ 
to a reference solution $(L^\star,S^\star)$ as $\|L-L^\star\|_F/\|L^\star\|_F +  \|S-S^\star\|_F/\|S^\star\|_F$. 
Time to compute this error is accounted for (so does not factor into the comparisons). 
Finally, since stopping conditions are solver dependent, we show plots of error vs time. 
All tests are done in Matlab and the dominant computational time was due to matrix multiplications for all algorithms; 
all code was run in the same quad-core $1.6$~GHz i7 computer.

For our implementations of the \eqref{eq:max-SPCP-flip}, \eqref{eq:sum-SPCP-flip} and \eqref{eq:lag-SPCP}, we use a randomized SVD~\citep{halko2011finding}. 
Since the number of singular values needed is not known in advance, the partial SVD may be called several times (the same is true for PSPG, NSA and ASALM). 
Our code limits the number of singular values on the first two iterations in order to speed up calculation without affecting convergence.
Since the projection required by~\eqref{eq:sum-SPCP-flip} makes a partial SVD difficult, 
so we use Matlab's dense \texttt{SVD} routine.

\subsubsection{Synthetic test with exponential noise}
We first provide a test with generated data. The observations $\obs\in\R^{m\times n}$ 
with $m=400$ and $n=500$ were created by first sampling a rank $20$ matrix $\obs_0$ 
with random singular vectors (i.e.,~from the Haar measure) and singular values drawn from a uniform distribution 
with mean $0.1$, and then adding exponential random noise to the entire mtrix (with mean equal to one tenth the median absolute value of the entries of $\obs_0$). 
This exponential noise, which has a longer tail than Gaussian noise, is expected to be captured partly by the $S$ term and partly by the error term $\|L+S-\obs\|_F$.

Given $\obs$, the reference solution $(L^\star,S^\star)$ was generated by solving \eqref{eq:lag-SPCP} to very high accuracy; 
the values $\lambdaL=0.25$ and $\lambdaS=10^{-2}$ were picked by hand tuning $(\lambdaL,\lambdaS)$ to find a value such that both $L^\star$ and $S^\star$ are non-zero.
The advantage to solving \eqref{eq:lag-SPCP} is that knowledge of $(L^\star,S^\star,\lambdaL,\lambdaS)$ 
allows us to generate the parameters for all the other variants, and hence we can test different problem formulations.

With these parameters, $L^\star$ was rank 17 with nuclear norm $6.754$, 
$S^\star$ had 54 non-zero entries (most of them positive) with $\ell_1$ norm $0.045$, 
the normalized residual was $\|L^\star+S^\star-\obs\|_F/\|\obs\|_F=0.385$, and $\eps=1.1086$, 
$\lambdaSum=0.04$, $\lambdaMax=150.0593$, $\tauSum=6.7558$ and $\tauMax=6.7540$.

\begin{figure}[ht]
\centering
\includegraphics[width=.6\textwidth]{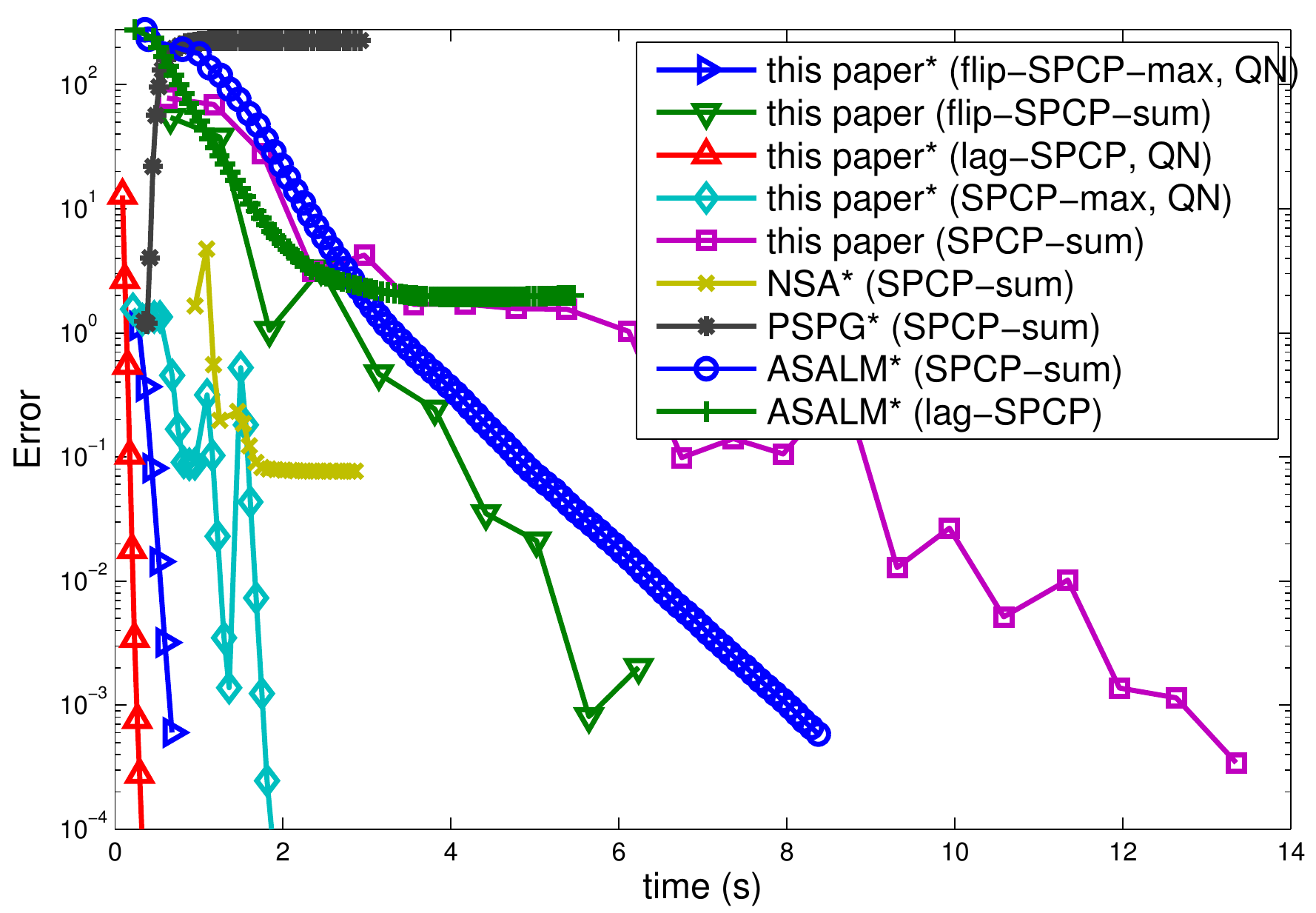}
\caption{The exponential noise test. The asterisk in the legend means the method uses a fast SVD (i.e., randomized SVD). }
\label{P1C3fig:1}
\end{figure}

Results are shown in Fig.~\ref{P1C3fig:1}. Our methods for \eqref{eq:max-SPCP-flip} and \eqref{eq:lag-SPCP} are extremely fast, 
because the simple nature of these formulations allows the quasi-Newton acceleration scheme of Section~\ref{sec:QN}.
In turn, since our method for solving~\eqref{eq:max-SPCP} uses the variational framework of Section~\ref{sec:variational} 
to solve a sequence of~\eqref{eq:max-SPCP-flip} problems, it is also competitive (shown in cyan in Figure~\ref{P1C3fig:1}).  
The jumps are due to re-starting the sub-problem solver with a new value of $\tau$, generated according to~\eqref{eq:newton}. 

Our proximal gradient method for \eqref{eq:sum-SPCP-flip}, which makes use of the projection in Lemma~\ref{lemma:jointProjection}, 
converges more slowly, since it is not easy to accelerate with the quasi-Newton scheme due to variable coupling, 
and it does not make use of fast SVDs. Our solver for~\eqref{eq:sum-SPCP}, which depends on a sequence of 
problems~\eqref{eq:sum-SPCP-flip},  converges slowly. 

The ASALM performs reasonably well, which was unexpected since it was shown to be worse than NSA and PSPG in \citep{AybatRPCA,Aybat2013}. 
The PSPG solver converges to the wrong answer, most likely due to a bad choice of the smoothing parameter $\mu$; 
we tried choosing several different values other than the default but did not see improvement for this test (for other tests, not shown, tweaking $\mu$ helped significantly). 
The NSA solver reaches moderate error quickly but stalls before finding a highly accurate solution.

\subsubsection{Synthetic test from \citet{Aybat2013}}
We show some tests from the test setup of \citet{Aybat2013} in the $m=n=1500$ case. The default setting of $\lambdaSum=1/\sqrt{\max(m,n)}$ was used, and then the NSA solver was run to high accuracy to obtain a reference solution $(L^\star,S^\star)$.  From the knowledge of $(L^\star,S^\star,\lambdaSum)$, 
one can generate $\lambdaMax,\tauSum,\tauMax,\eps$, but not $\lambdaS$ and $\lambdaL$, 
and hence we did not test the solvers for \eqref{eq:lag-SPCP} in this experiment. 
The data were generated as $\obs=L_0+S_0+Z_0$, where $L_0$ was sampled by multiplication of $m \times r$ and $r \times n$ normal Gaussian matrices, 
$S_0$ had $p$ randomly chosen entries uniformly distributed within $[-100,100]$, and $Z_0$ was white noise chosen to give a SNR of $45$~dB. 
We show three tests that vary the rank from $\{0.05,0.1\}\cdot \min(m,n)$ 
and the sparsity ranging from $p=\{0.05,0.1\}\cdot mn$.
Unlike \citet{Aybat2013}, who report error in terms of a true noiseless signal $(L_0,S_0)$, 
we report the optimization error relative to $(L^\star,S^\star)$.

For the first test (with $r=75$ and $p=0.05\cdot mn$), $L^\star$ had rank $786$ and nuclear norm $111363.9$; 
$S^\star$ had $75.49\%$ of its elements nonzero and $\ell_1$ norm $ 5720399.4$, 
and $\|L^\star+S^\star-\obs\|_F/\|\obs\|_F=1.5\cdot 10^{-4}$. 
The other parameters were $\eps=3.5068$, $\lambdaSum=0.0258$, $\lambdaMax=0.0195$, $\tauSum=2.5906\cdot 10^{5}$ and $\tauMax=1.1136\cdot 10^{5}$.
An interesting feature of this test is that while $L_0$ is low-rank, $L^\star$ is nearly low-rank but with a small tail of significant singular values until number $786$. 
We expect methods to converge quickly to low-accuracy, 
and then slow down as they try to find a highly-accurate larger rank solution. 

\begin{figure}[ht]
\centering
\includegraphics[width=.5\textwidth]{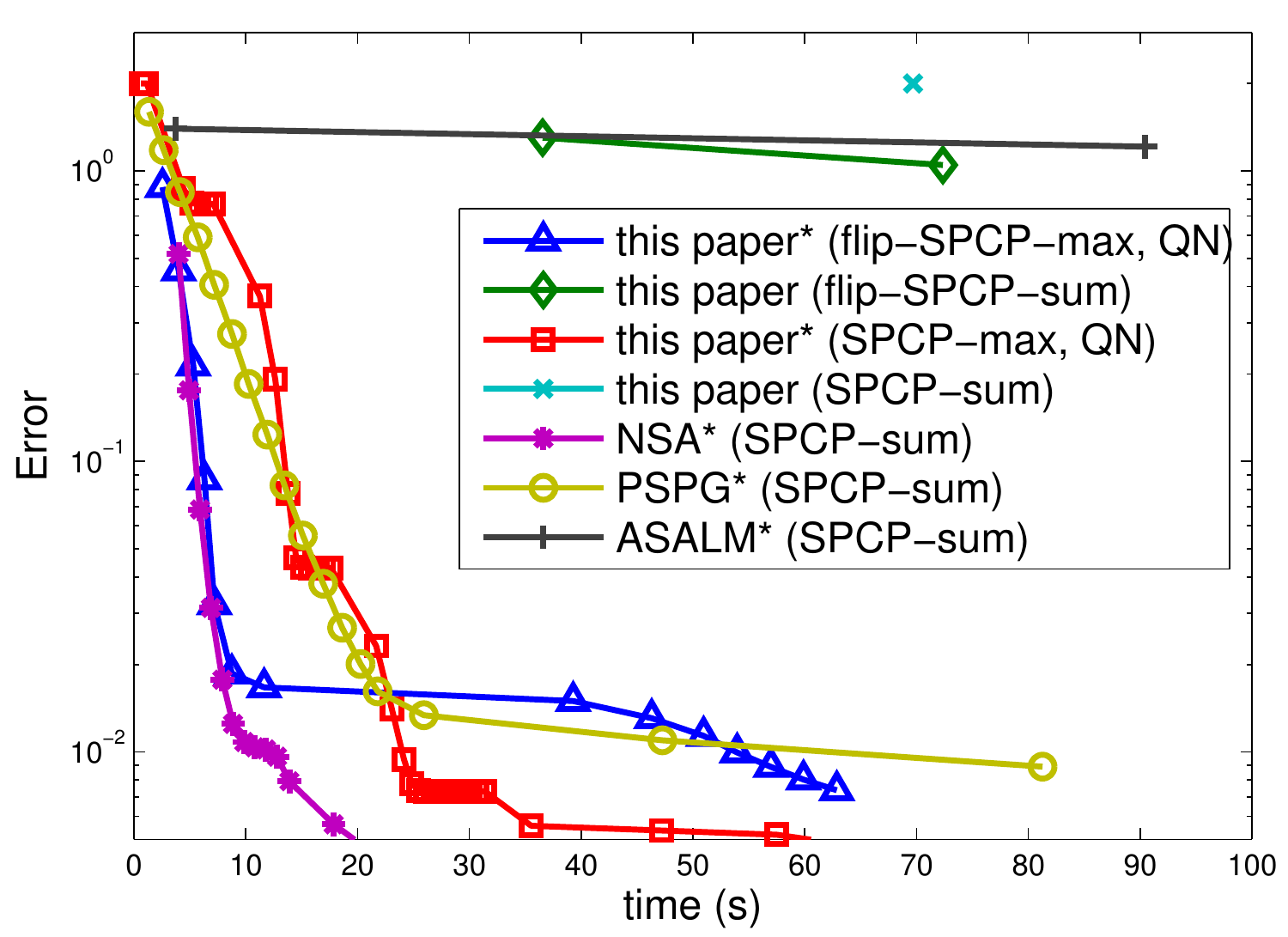}
\caption{The $1500\times 1500$ synthetic noise test. }
\label{P1C3fig:2}
\end{figure}

The results are shown in Fig.~\ref{P1C3fig:2}. Errors barely dip below $0.01$ (for comparison, an error of $2$ is achieved by setting $L=S=0$).
The NSA and PSPG solvers do quite well. In contrast to the previous test, ASALM does poorly. 
Our methods for \eqref{eq:sum-SPCP-flip}, and hence \eqref{eq:sum-SPCP}, are not competitive, since they use dense SVDs. 
We imposed a time-limit of about one minute, so these methods only manage a single iteration or two. 
Our quasi-Newton method for \eqref{eq:max-SPCP-flip} does well initially, then takes a long time due to a long partial SVD computation. 
Interestingly, \eqref{eq:max-SPCP} does better than pure \eqref{eq:max-SPCP-flip}. 
One possible explanation is that it chooses a fortuitous sequence of $\tau$ values, 
for which the corresponding \eqref{eq:max-SPCP-flip} subproblems become increasingly hard, 
and therefore benefit from the warm-start of the solution of the easier previous problem.
This is consistent with empirical observations regarding continuation techniques, see e.g.,~\citep{vandenberg2008probing,Wright2009}.

\begin{figure}[ht]
\centering
\includegraphics[width=.5\textwidth]{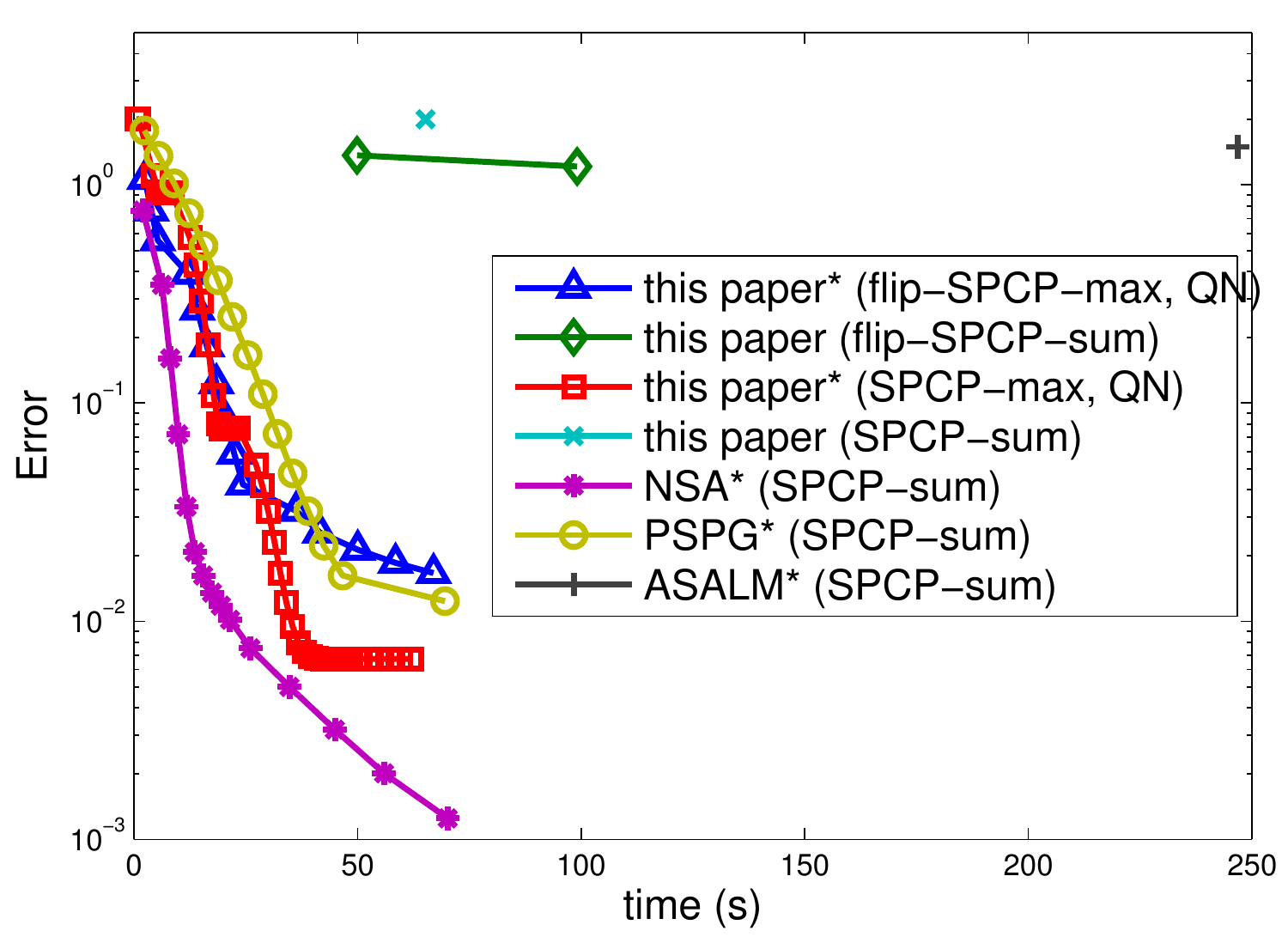}
\caption{Second $1500\times 1500$ synthetic noise test.}
\label{P1C3fig:3}
\end{figure}

Fig.~\ref{P1C3fig:3} is the same test but with $r=150$ and $p=0.1\cdot m n$, and the conclusions are largely similar.

\subsubsection{Cloud removal}
\begin{figure*}[ht]
\centering
\includegraphics[width=.7\textwidth]{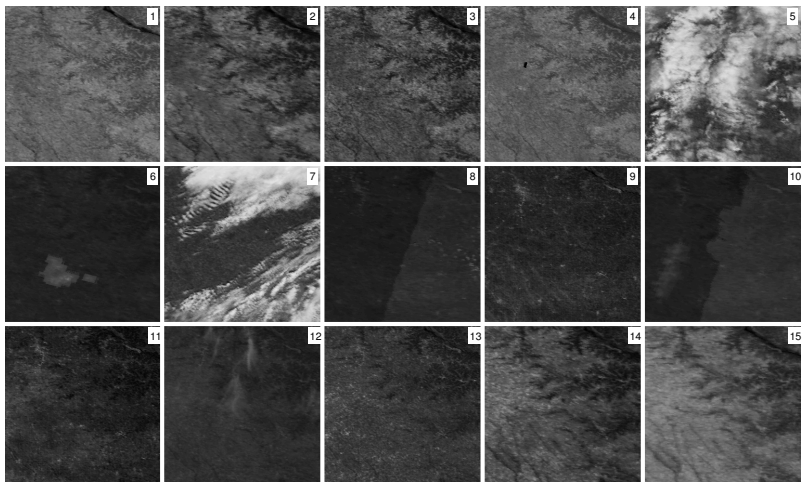}
\caption{Satellite photos of the same location on different days.}
\label{P1C3fig:4}
\end{figure*}

\begin{figure}[t]
    \centering
    \includegraphics[width=.5\textwidth]{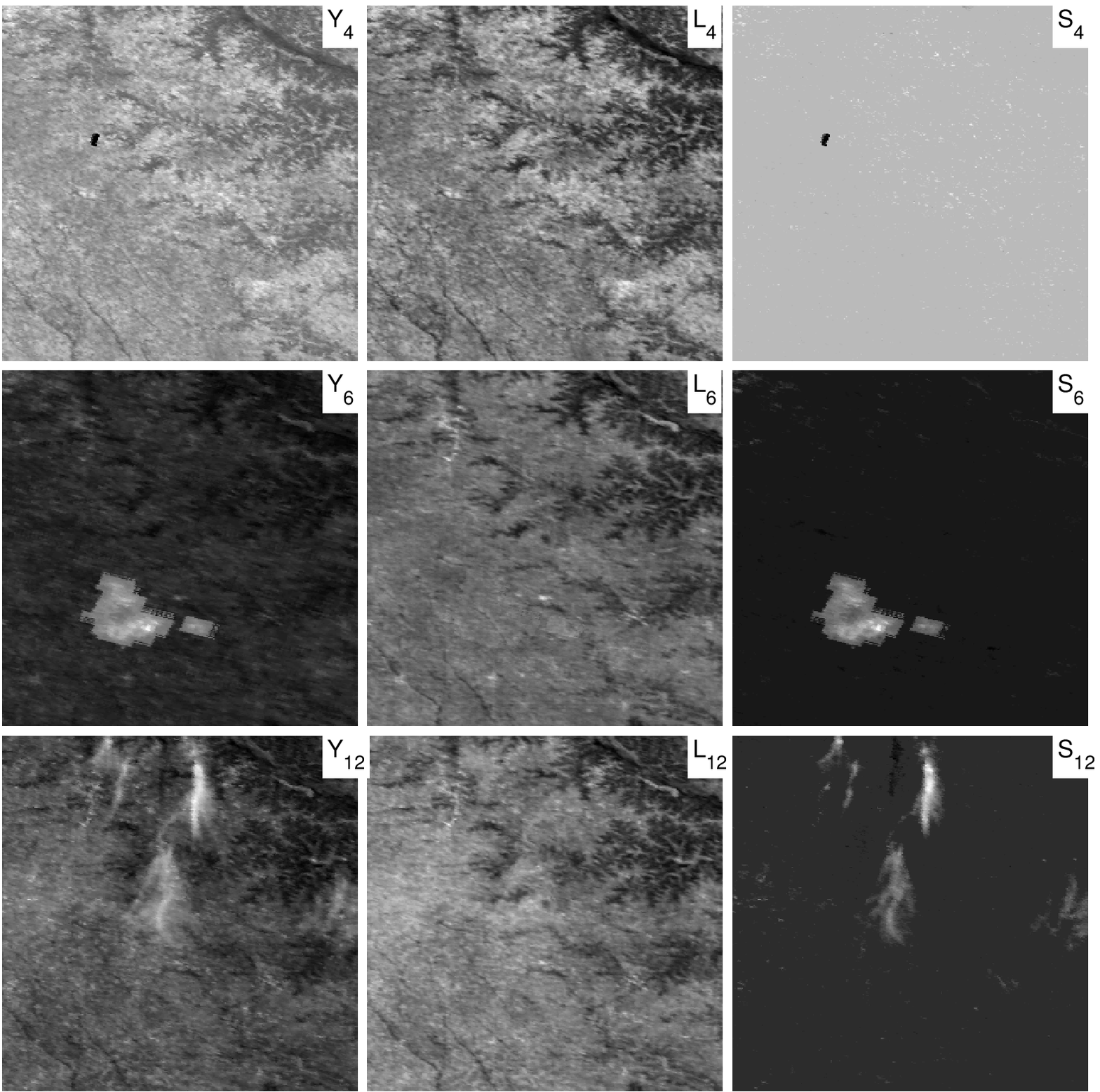}
    \caption{Showing frames 4, 5 and 12. Leftmost column is original data, middle column is low-rank term of the solution, and right column is sparse term of the solution. Data have been processed slightly to enhance contrast for viewing. }
    \label{P1C3fig:5}
\end{figure}

Figure~\ref{P1C3fig:4} shows 15 images of size $300 \times 300$ that come from the MODIS satellite\footnote{Publicly available at \url{http://ladsweb.nascom.nasa.gov/}}, 
after some transformations to turn images from different spectral bands into one grayscale images. 
Each image is a photo of the same rural location but at different points in time over the course of a few months. 
The background changes slowly and the variability is due to changes in vegetation, snow cover, and different reflectance. 
There are also outlying sources of error, mainly due to clouds (e.g., major clouds in frames 5 and 7, smaller clouds in frames 9, 11 and 12), 
as well as artifacts of the CCD camera on the satellite (frame 4 and 6) and issues stitching together photos of the same scene (the lines in frames 8 and 10).

There are many applications for clean satellite imagery, so removing the outlying error is of great practical importance. 
Because of slow changing background and sparse errors, we can model the problem using the robust PCA approach. 
We use the \eqref{eq:max-SPCP-flip} version due to its speed, and pick parameters $(\lambdaMax,\tauMax)$ by using a Nelder-Mead simplex search. 
For an error metric to use in the parameter tuning, we remove frame $1$ from the data set (call it $y_1$) and set $\obs$ to be frames 2--15. 
From this training data $\obs$, the algorithm generates $L$ and $S$. Since $L$ is a $300^2 \times 14$ matrix, it has far from full column span. 
Thus our error is the distance of $y_1$ from the span of $L$, i.e., $\|y_1 - \proj_{\text{span}(L)}(y_1)\|_2$.

Our method takes about 11 iterations and 5 seconds, and uses a dense SVD instead of the randomized method due to the extremely high aspect ratio of the matrix: the matrix is $15 \times 300^2$, and the cost of a dense SVD is linear in the large dimension, so the computational burden is not large. 
Some results of the obtained $(L,S)$ outputs are in Fig.~\ref{P1C3fig:5}, where one can see that some of the anomalies in the original data frames $\obs$ 
are picked up by the $S$ term and removed from the $L$ term. 
Frame 4 has what appears to be a camera pixel error;  frame 6 has another artificial error (that is, caused by the camera and not the scene); and frame 12 has cloud cover.

\subsubsection{Analysis of brain activity in the zebra fish}
Recent work by \citet{ZebraFish} has produced video recordings of brain activity, in vivo, of zebra fish. These datasets are used to confirm scientific theories about the inner-working of the brain as well as to discover unexpected connections. Ultimately the goal is to discover causal, not just correlated, relationships. PCA on these datasets is one of the standard tools used by biologists in order to uncover correlations.

Using a public video of the dataset, we focus on a single 2D slice, sub-sampled spatially (and perhaps with video compression artifacts). We use \eqref{eq:max-SPCP} as the RPCA technique, and therefore need to estimate $\epsilon$ and $\lambdaMax$.  To find $\epsilon$, we first take the SVD of the data matrix $\obs$. The corresponding singular vales $\sigma(\obs)$ are plotted in Fig.~\ref{P1C3fig:zebra2}. This gives us an idea of the compressibility of the data. Keeping about 30 singular values explains over $99\%$ of the data, so if we look for $L$ with approximately rank $30$, then, not including the sparse term $S$ (which we expect to be very sparse), we should pick $\epsilon \simeq \sqrt{ \sum_{i=31}^{752} \sigma_i^2(\obs) }$. This value works well in practice (see Fig.~\ref{P1C3fig:zebra1}) and did not require cross-validation.

\begin{figure*}[ht]
\centering
\includegraphics[width=.5\textwidth]{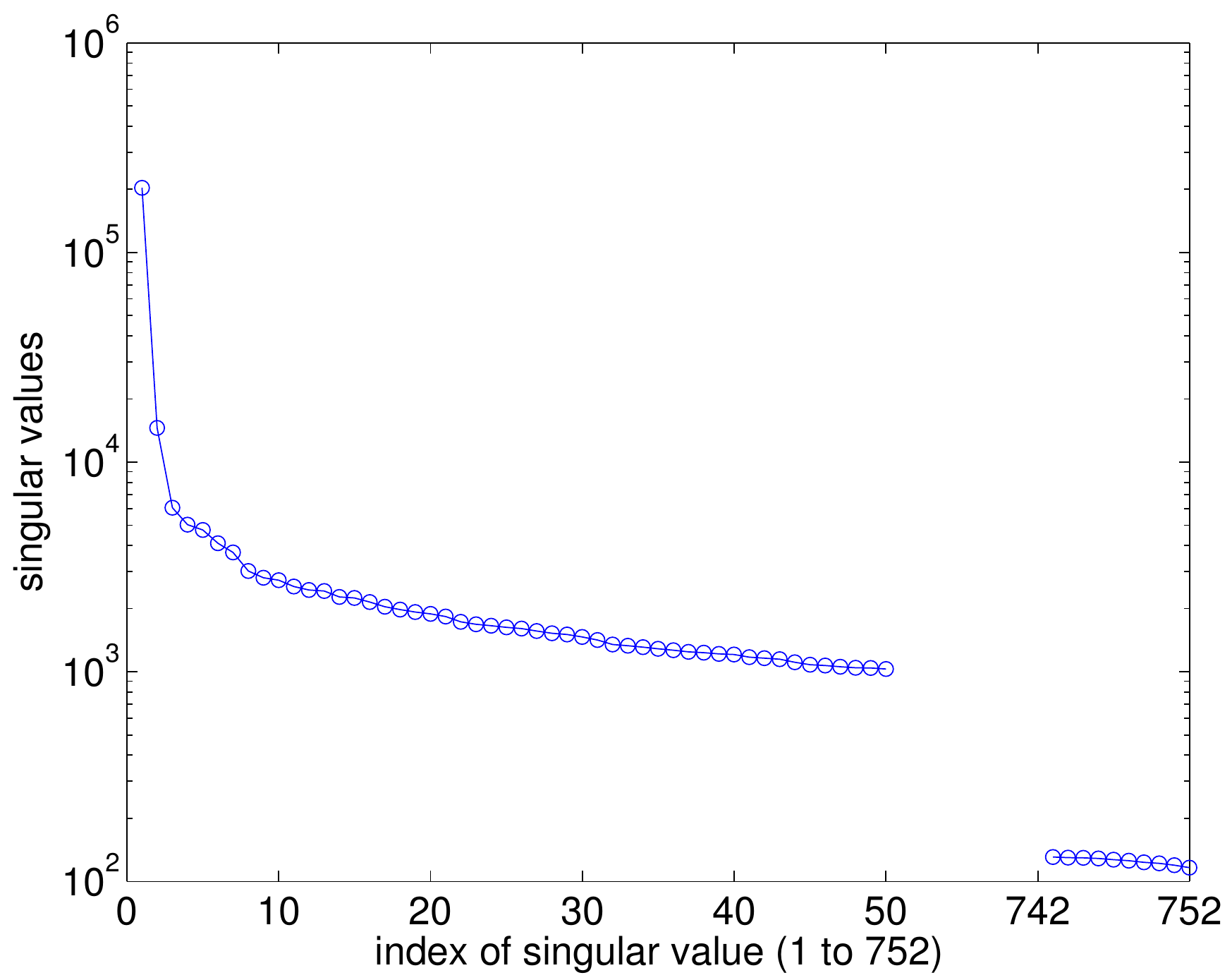}
\caption{Decay of singular values of $\obs$ for the zebra fish dataset. Singular values 51--742 are not shown.}
\label{P1C3fig:zebra2}
\end{figure*}

The $\lambdaMax$ parameter is tuned by hand, but only takes $3$ runs to find a reasonable value. This is much simpler than tuning $\lambdaMax$ and $\epsilon$ by hand simultaneously. Figure~\ref{P1C3fig:zebra1} shows the resulting top left singular vectors of $\obs$ and of $L$, as well as their difference. We see that their difference is sparse, as expected. Since these are singular vectors, not just individual frames from the movie, these sparse differences are persistent over time, and perhaps meaningful. These unpredictable locations could be caused by sensor/microscope error, or they could mean that they come from a part of the brain that is not well correlated with general brain activity. Either way, it is useful to be able to separate out this effect.

\begin{figure*}[ht]
\centering
\subfigure[Top left singular vector of $\obs$]{\includegraphics[width=.22\textwidth]{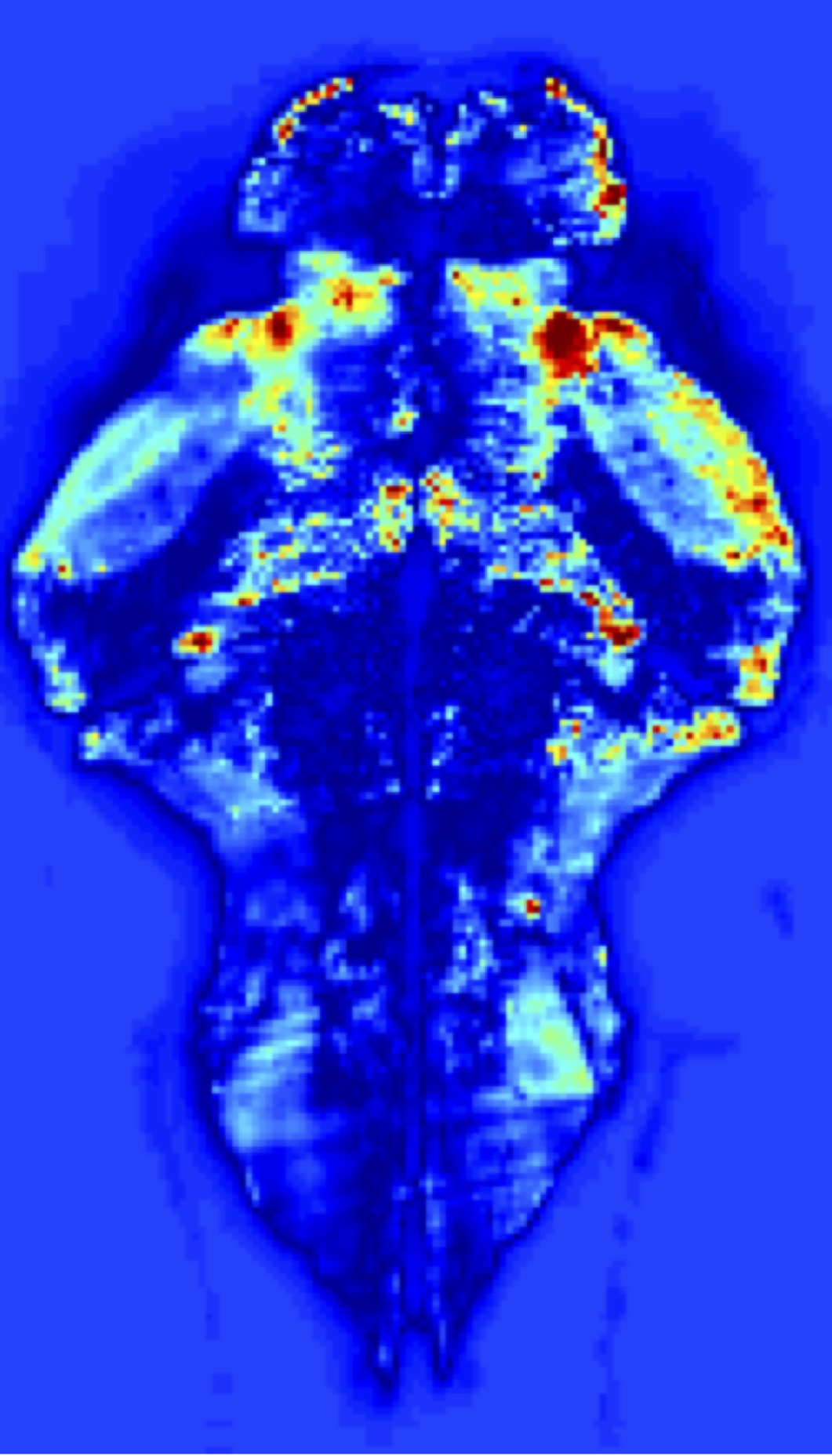}}
\subfigure[Top left singular vector of $L$.]{\includegraphics[width=.22\textwidth]{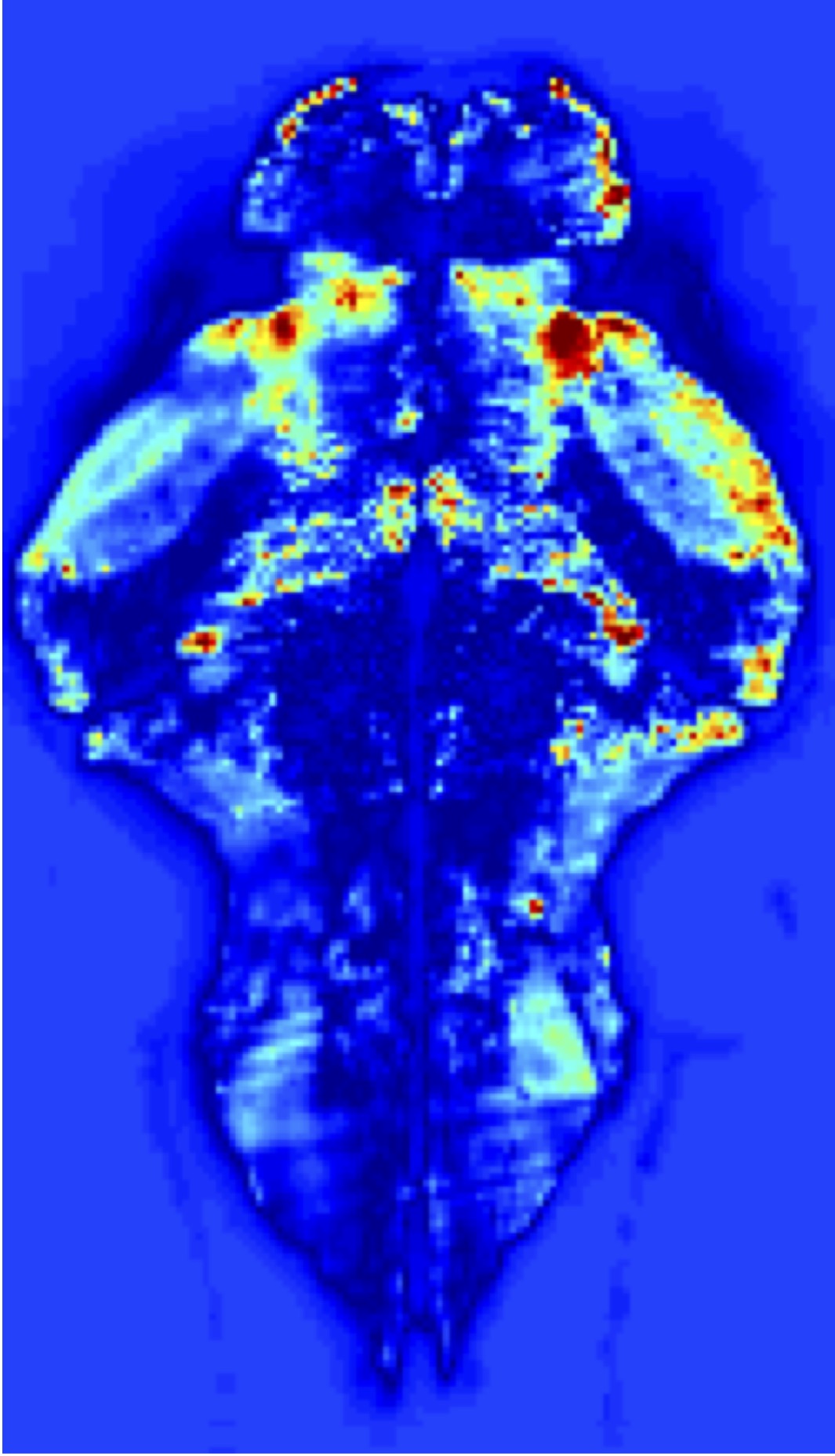}}
\subfigure[Difference of (a) and (b).]{\includegraphics[width=.22\textwidth]{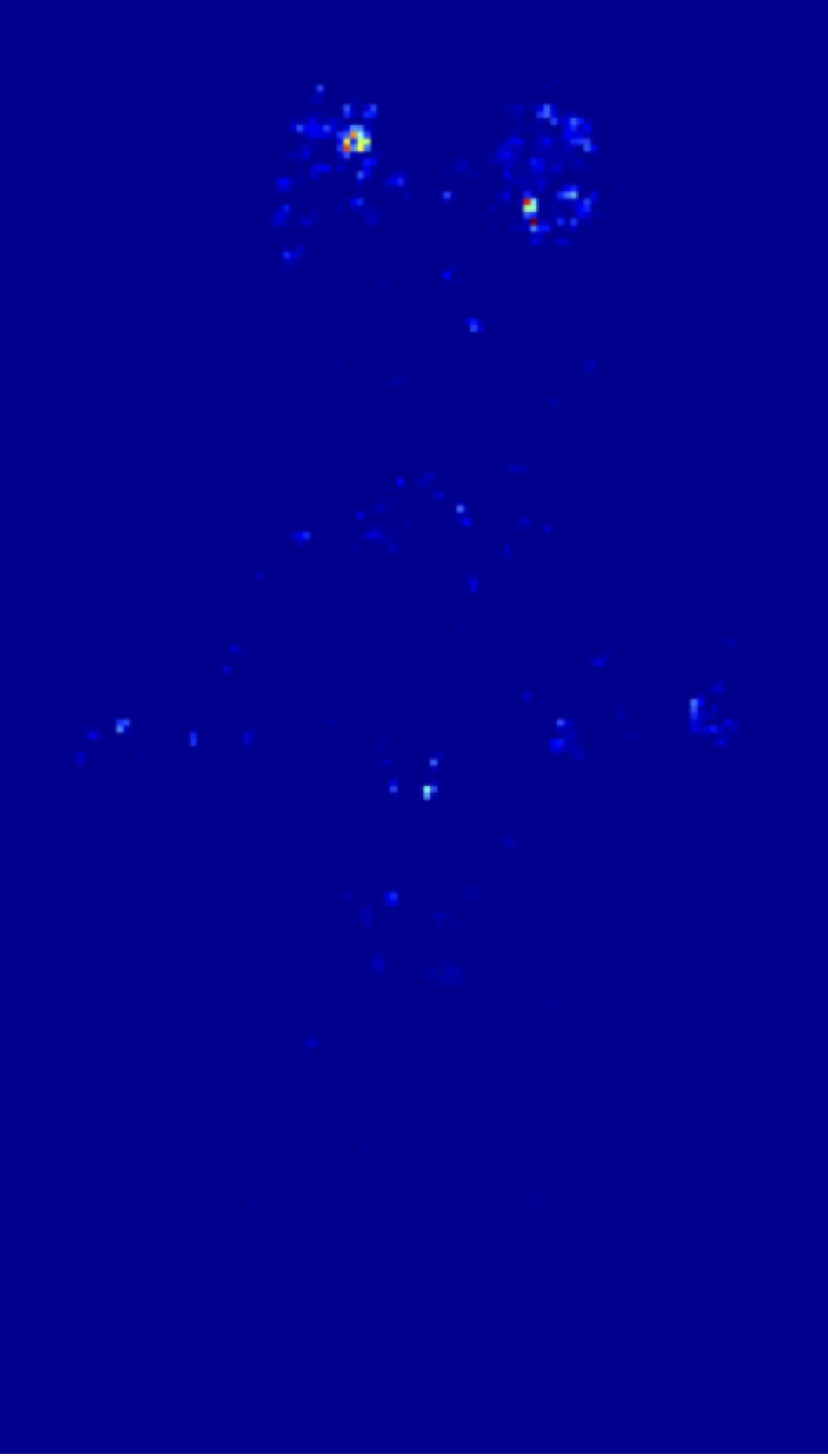}}
\caption{Top left singular vector of $\obs$ and of $L$, as well as their difference.}
\label{P1C3fig:zebra1}
\end{figure*}

\section{Conclusion}
\noindent 
We have discussed both specific algorithms for the RPCA problem, and general algorithm frameworks (``TFOCS'', and ``flipped'' variational value-function approaches) that incorporate RPCA and variants. The custom RPCA algorithm works extremely well in practice, and the process of ``flipping'' the objective has been studied rigorously, but the inner ``quasi-Newton'' algorithm lacks a rigorous convergence theory. The general algorithm TFOCS, as well as similar proposals such as the PDHG algorithm, have more established theory but lack practical guidance on setting parameters, and are in practice slower than the special purpose algorithms.  An obvious goal of future work is to either improve the analysis of these algorithms (for example, this may give insight into parameter selection), or derive new algorithms that inherent all the advantages.

The running theme of this work has been the benefits of solving variants of RPCA, in particular \eqref{eq:sum-SPCP-flip} and the new variants \eqref{eq:max-SPCP} and \eqref{eq:max-SPCP-flip}. These versions sometimes allow a good estimate of $\tau$ (or $\eps$ for \eqref{eq:max-SPCP}), thus reducing the parameter selection of the model to the single scalar $\lambdaSum$ or $\lambdaMax$.  
Our theory allows for different regularizers and data fidelity terms; using these in practice is interesting future work.

\bibliography{RPCA_Refs}

\end{document}